\let\originalleft\left
\let\originalright\right
\renewcommand{\left}{\mathopen{}\mathclose\bgroup\originalleft}
\renewcommand{\right}{\aftergroup\egroup\originalright}
\newcommand{\R} {{\mathbb R}}
\newcommand{\pii} {{p}}
\newcommand\trans{^{\mathrm{T}}}
\newcommand\Id{\mathrm{Id}}
\newcommand\EE{\mathbb{E}}
\newcommand\bracket[1]{\left[ #1 \right]}
\newcommand\brackett[1]{[ #1 ]}
\newcommand\expect[2][]{\EE\ifstrempty{#1}{}{_{#1}}\bracket{#2}}
\newcommand\expectt[2][]{\EE\ifstrempty{#1}{}{_{#1}}\brackett{#2}}
\newcommand\norm[1]{\left\lVert #1 \right\rVert}
\newcommand\paren[1]{\left( #1 \right)}
\DeclareMathOperator{\diag}{diag}
\DeclareMathOperator{\Tr}{Tr}
\newtheorem{theorem}{Theorem}[section]
\newtheorem{lemma}{Lemma}[section]
\newtheorem{proposition}{Proposition}[section]
\newtheorem{definition}{Definition}[section]
\newtheorem{assumption}{Assumption}[section]
\begin{document}

\twocolumn[
\icmltitle{Conditionally Strongly Log-Concave Generative Models}


\icmlsetsymbol{equal}{*}

\begin{icmlauthorlist}
\icmlauthor{Florentin Guth}{equal,ens}
\icmlauthor{Etienne Lempereur}{equal,ens}
\icmlauthor{Joan Bruna}{nyu}
\icmlauthor{Stéphane Mallat}{cdffi}
\end{icmlauthorlist}

\icmlaffiliation{ens}{Département d'informatique, \'Ecole Normale Supérieure, Paris, France}
\icmlaffiliation{nyu}{Courant Institute of Mathematical Sciences and Center for Data Science, New York University, USA}
\icmlaffiliation{cdffi}{Collège de France, Paris, France, and Flatiron Institute, New York, USA}

\icmlcorrespondingauthor{Florentin Guth}{florentin.guth@ens.fr}
\icmlcorrespondingauthor{Etienne Lempereur}{etienne.lempereur@ens.fr}

\icmlkeywords{Machine Learning, ICML}

\vskip 0.3in
]



\printAffiliationsAndNotice{\icmlEqualContribution} 

\begin{abstract}
There is a growing gap between the impressive results of deep image generative models and classical algorithms that offer theoretical guarantees. The former suffer from mode collapse or memorization issues, limiting their application to scientific data. The latter require restrictive assumptions such as log-concavity to escape the curse of dimensionality. We partially bridge this gap by introducing conditionally strongly log-concave (CSLC) models, which factorize the data distribution into a product of conditional probability distributions that are strongly log-concave. This factorization is obtained with orthogonal projectors adapted to the data distribution. It leads to efficient parameter estimation and sampling algorithms, with theoretical guarantees, although the data distribution is not globally log-concave. We show that several challenging multiscale processes are conditionally log-concave using wavelet packet orthogonal projectors. Numerical results are shown for physical fields such as the $\varphi^4$ model and weak lensing convergence maps with higher resolution than in previous works. 
\end{abstract}

\section{Introduction}

Generative modeling requires the ability to estimate an accurate model of a probability distribution from a training dataset, as well as the ability to efficiently sample from this model. Any such procedure necessarily introduces errors, 
due to limited expressivity of the model class, learning errors of selecting the best model within that class, and sampling errors due to limited computational resources. For high-dimensional data, it is highly challenging to control all errors with polynomial-time algorithms. Overcoming the curse of dimensionality requires exploiting structural properties of the probability distribution. For instance, theoretical guarantees can be obtained with restrictive assumptions of log-concavity, or with low-dimensional parameterized models. In contrast, recent deep-learning-based approaches such as diffusion models \citep{dall-e2,imagen,stable-diffusion} have obtained impressive results for distributions which do not satisfy these assumptions. Unfortunately, in such cases, theoretical guarantees are lacking, and diffusion models have been found to memorize their training data \citep{data-extraction-diffusion-models,data-replication-diffusion-models}, which is inappropriate for scientific applications. The disparity between these two approaches highlights the need for models which combine theoretical guarantees with sufficient expressive power. This paper contributes to this objective by defining the class of conditionally strongly log-concave distributions. We show that it is sufficiently rich to model the probability distributions of complex multiscale physical fields, and that such models can be sampled with fast algorithms with provable guarantees.

\paragraph{Sampling and learning guarantees.}
While the theory for sampling log-concave distributions is well-developed \citep{chewi22}, simultaneous learning and sampling guarantees for general non-log-concave distributions are less common. \citet{block2020fast} establish a fast mixing rate of multiscale Langevin dynamics under a manifold hypothesis.
\citet{koehler2022statistical} studies the asymptotic efficiency of score-matching compared to maximum-likelihood estimation under a global log-Sobolev inequality, which is not quantitative beyond globally log-concave distributions. 
\citet{chen2022sampling,chen2022improved} establish polynomial sampling guarantees for a reverse score-based diffusion, given a sufficiently accurate estimate of the time-dependent score. \citet{sriperumbudur2013density, sutherland2018efficient, domingo2021energy} study density estimation with energy-based models under different infinite-dimensional parametrizations of the energy. They use various metrics including score-matching to establish statistical guarantees that avoid the curse of dimensionality, under strong smoothness or sparsity assumptions of the target distribution.  
Finally, \citet{balasubramanian2022towards} derive sampling guarantees in Fisher divergence of Langevin Monte-Carlo beyond log-concave distributions. While these hold under a general class of target distribution, such Fisher guarantees are much weaker than Kullback-Leibler guarantees. Bridging this gap requires some structural assumptions on the distribution.

\paragraph{Multiscale generative models.}
Images include structures at all scales, and several generative models have relied on decompositions with wavelet transforms \citep{yu2020wavelet,gal2021swagan}. More recently, \citet{marchand_wavelet_2022} established a connection between the renormalization group in physics and a conditional decomposition of the probability distribution of wavelet coefficients across scales. These models rely on maximum likelihood estimations with iterated Metropolis sampling, which leads to a high computational complexity. They have also been used with score matching \citep{guth_wavelet_2022,kadkhodaie-local-conditional-models} in the context of score-based diffusion models \citep{song2020score}, which suffer from memorisation issues. 

\paragraph{Conditionally strongly log-concave distributions.}
We consider probability distributions whose Gibbs energy is dominated by quadratic interactions, 
\begin{equation*}
p(x) = \frac1Z e^{-E(x)} ~~\text{with } E(x) = \frac 1 2 x\trans K x + V(x).
\end{equation*}
The matrix $K$ is positive symmetric and $V$ is a non-quadratic potential. If $V$ is non-convex, then $p$ is a priori not log-concave. However, the Hessian of $E$ may be dominated by the large eigenvalues of $K$, whose corresponding eigenvectors define directions in which $p$ is log-concave. For multiscale stationary distributions, $K$ is a convolution whose eigenvalues have a power-law growth at high frequencies. As a result, the conditional distribution of high frequencies given lower frequencies may be log-concave.

\Cref{sec:section3} introduces factorizations of probability distributions into products of conditional distributions with arbitrary hierarchical projectors. If the projectors are adapted to obtain strongly log-concave factors, we prove that maximum likelihood estimation can be replaced by score matching, which is computationally more efficient. The MALA sampling algorithm also has a fast convergence due to the conditional log-concavity. 
\Cref{sec:section4} describes a class of multiscale physical processes that admit conditionally strongly log-concave (CSLC) decompositions with wavelet packet projections. This class includes the $\varphi^4$ model studied in statistical physics.
These results thus provide an approach to provably avoid the numerical instabilities at phase transitions observed in such models. 
We then show in \Cref{sec:numerics} that wavelet packet CSLC decompositions provide accurate models of cosmological weak lensing images, synthesized as test data for the Euclid outer-space telescope mission \citep{laureijs2011euclid}.

The main contributions of the paper are:
\begin{itemize}
\item The definition of general CSLC models, which provide learning guarantees by score matching and sampling convergence bounds with MALA.
\item CSLC models of multiscale physical fields using wavelet packet projectors. We show that $\varphi^4$ and weak lensing both satisfy the CSLC property, which leads to efficient and accurate generative modeling.
\end{itemize}
The code to reproduce our numerical experiments is available at \url{https://github.com/Elempereur/WCRG}.

\section{Conditionally Strongly Log-Concave Models}
\label{sec:section3}

\Cref{sec:orthogonal_factorization} introduces conditionally strongly log-concave models, by factorizing the probability density into conditional probabilities. For these models, 
\Cref{sec:conditional_learning,sec:sm_exponential}  give upper bounds on learning errors with score matching algorithms, and  \Cref{sec:conditional_sampling} on sampling errors with a Metropolis-Adjusted Langevin Algorithm (MALA). Proofs of the mathematical results can be found in \Cref{sec:proofs_sec3}.

\subsection{Conditional Factorization and Log-Concavity}
\label{sec:orthogonal_factorization}

We introduce a probability factorization based on orthogonal projections on progressively smaller-dimensional spaces. The projections are adapted to define strongly log-concave conditional distributions. 

\paragraph{Orthogonal factorization.}
Let $x \in \mathbb{R}^d$. A probability distribution $p(x)$ can be decomposed into a product of autoregressive conditional probabilities
\begin{equation}
    \label{eq:autoregressive}
    p(x) = p(x[1]) \prod_{i=2}^d p(x[i] \,|\, x[1], \dots, x[i-1]).
\end{equation}
However, more general factorizations can be obtained by considering blocks of variables in an orthogonal basis. We initialize the decomposition with $x_0 = x$. For $j =1$ to $J$, we recursively split $x_{j-1}$ in two orthogonal projections:
\begin{equation*}
    x_j = G_j x_{j-1} \text{ and } \bar x_j = \bar G_j x_{j-1},
\end{equation*}
where $G_j$ and $\bar G_j$ are unitary operators such that $G_j\trans G_j + \bar G_j\trans \bar G_j = \Id$. It follows that
\begin{equation}
   \label{eqn:invwav}
    x_{j-1} = G_j\trans x_{j} + \bar{G}_j\trans \bar{x}_{j}.
\end{equation}
Let $d_j = \text{dim}(x_j)$ and  $\bar d_j = \text{dim}(\bar{x}_j)$, then $d_{j-1} = d_{j} + \bar d_j$.

Since the decomposition is orthogonal, for any probability distribution $p$ we have
\begin{equation*}
    p(x_{j-1}) = p(x_j, \bar x_j) = p(x_j) p(\bar{x}_j | x_j).
\end{equation*}
Cascading this decomposition $J$ times gives
\begin{equation}
    \label{condexp}
    p(x)  = p(x_J) \prod_{j=1}^J p(\bar{x}_j |x_j),
\end{equation}
which generalizes the autoregressive factorization (\ref{eq:autoregressive}). The properties of the factors $p(\bar x_j|x_j)$ depend on the choice of the orthogonal projectors $G_j$ and $\bar G_j$, as we shall see below.

\paragraph{Model learning and sampling.}
A parametric model $p_\theta(x)$ of $p(x)$ can be defined from \cref{condexp} by computing parametric models of $p(x_J)$ and each $p(\bar{x}_j |x_j)$:
\begin{equation}
    \label{paramModel}
    p_\theta(x)  = p_{\theta_J} (x_J) \prod_{j=1}^J p_{\bar \theta_j} (\bar{x}_j |x_j),
\end{equation}
with $\theta = (\theta_J , \bar \theta_j)_{j \geq J}$.

Learning this model then amounts to optimizing the parameters $\theta_J, (\bar{\theta}_j)_j$ from available data, so that the resulting distributions are close to the target. We measure the associated learning errors with the Kullback-Leibler divergences $\epsilon^L_J = \mathrm{KL}_{x_J} (p (x_J) \,\|\, p_{\theta_J}(x_J))$ and
\begin{equation*}
\bar \epsilon^L_j = \expect[x_j]{\mathrm{KL}_{\bar x_j} (p (\bar x_j |x_j) \,\|\, p_{\bar \theta_j}(\bar x_j |x_j))},\ j\leq J.
\end{equation*}

Once the parameters have been estimated, we sample from $p_\theta$ as follows. We first compute a sample $x_J$ of $p_{\theta_J}$. The sampling introduces an error, which we measure with $\epsilon^S_J = \mathrm{KL}_{x_J} (\hat p_{\theta_J}(x_J) \,\|\, p_{\theta_J}(x_J))$, where $\hat{p}_{\theta_J}$ is the law of the samples returned by the algorithm. For each $j \leq J$, given the sampled $x_j$, we compute a sample $\bar x_j$ of $p_{\bar \theta_j} (\bar x_j | x_j)$ and recover $x_{j-1}$ with \cref{eqn:invwav}, up to $j=1$, where it computes $x = x_0$. Let $\hat p_{\bar \theta_j}$ be the law of computed samples $\bar{x}_j$. It also introduces an error
\begin{equation*}
    \bar \epsilon^S_j = \expect[x_j]{\mathrm{KL}_{\bar x_j} (\hat p_{\bar \theta_j}(\bar x_j |x_j) \,\|\, p_{\bar \theta_j}(\bar x_j |x_j))},\ j \leq J.
\end{equation*}
Let $\hat p$ be the (joint) law of the computed samples $x$. The following proposition relates the total variation distance $\mathrm{TV}(\hat p, p)$ with the learning and sampling errors for each $j$.

\begin{restatable}[Error decomposition]{proposition}{errordecomp}
    \label{prop:global_error}
    \begin{equation*}
        \mathrm{TV}( \hat{p} , p) \leq \frac{1}{\sqrt{2}}\left(  \sqrt{
        \epsilon^L_J + \sum_{j=1}^J \bar \epsilon^{L}_j }
        + \sqrt{\epsilon^S_J + \sum_{j=1}^J \bar \epsilon^{S}_j}\right).
    \end{equation*}
\end{restatable}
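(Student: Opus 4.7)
The plan is to reduce the total variation bound to two KL divergences via Pinsker's inequality, and then decompose each KL using the chain rule along the orthogonal factorization (\ref{condexp}).

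First, I would apply the triangle inequality $\mathrm{TV}(\hat p, p) \leq \mathrm{TV}(\hat p, p_\theta) + \mathrm{TV}(p_\theta, p)$ and bound each term by Pinsker's inequality, choosing the direction of KL so that the outer expectations in the ensuing chain rule fall on the natural marginal of the conditioning variable:
$$\mathrm{TV}(p_\theta, p) \leq \sqrt{\tfrac{1}{2}\mathrm{KL}(p\,\|\,p_\theta)}, \qquad \mathrm{TV}(\hat p, p_\theta) \leq \sqrt{\tfrac{1}{2}\mathrm{KL}(\hat p\,\|\,p_\theta)}.$$
Pulling the $1/\sqrt{2}$ out of the two square roots accounts for the prefactor in the statement.

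Next, the key step is to establish
$$\mathrm{KL}(p\,\|\,p_\theta) = \epsilon^L_J + \sum_{j=1}^J \bar\epsilon^L_j, \qquad \mathrm{KL}(\hat p\,\|\,p_\theta) = \epsilon^S_J + \sum_{j=1}^J \bar\epsilon^S_j.$$
I would derive both by iterating the chain rule for KL,
$$\mathrm{KL}(\mu(y,z)\,\|\,\nu(y,z)) = \mathrm{KL}(\mu(y)\,\|\,\nu(y)) + \EE_{y\sim\mu}\bracket{\mathrm{KL}(\mu(z|y)\,\|\,\nu(z|y))},$$
applied to the unitary change of variables $x_{j-1} \leftrightarrow (x_j, \bar x_j)$. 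Because $G_j, \bar G_j$ are unitary, the densities $p$, $p_\theta$, and $\hat p$ all factorize as in (\ref{condexp}), so the marginal and conditional factors appearing in the chain rule coincide with the parametric factors in (\ref{paramModel}) and with the factors used by the generative sampling procedure. Running the recursion from $j=1$ up to $j=J$ produces the two identities, with the outer expectations in $\bar\epsilon^L_j$ taken under $x_j \sim p$ and those in $\bar\epsilon^S_j$ under $x_j \sim \hat p$, matching the generative process.

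I do not anticipate a genuine obstacle here: the argument is a routine unrolling of the chain rule combined with Pinsker and the triangle inequality. The only points that require care are the choice of KL direction (so that the chain rule produces outer expectations on the correct marginal in each of the two summands), and verifying that $p_\theta$ and $\hat p$ share the same hierarchical conditional structure as $p$ so that KL decomposes cleanly into per-scale contributions; both are immediate from the definitions in (\ref{paramModel}) and from the sequential sampling scheme described above.
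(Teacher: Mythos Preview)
Your proposal is correct and follows exactly the same route as the paper: triangle inequality for $\mathrm{TV}$, Pinsker's inequality on each term with the KL directions $\mathrm{KL}(p\,\|\,p_\theta)$ and $\mathrm{KL}(\hat p\,\|\,p_\theta)$, and then the chain rule for KL along the hierarchical factorization to obtain the per-scale error sums. Your remark about the outer expectation in $\bar\epsilon^S_j$ being under $x_j \sim \hat p$ is the right reading of the definitions and is precisely what makes the chain-rule identity $\mathrm{KL}(\hat p\,\|\,p_\theta)=\epsilon^S_J+\sum_j \bar\epsilon^S_j$ go through.
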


The overall error depends on the sum of learning and sampling errors for each conditional probability distribution. Therefore, to control the total error, we need sufficient conditions ensuring that each of these sources of error is small. We introduce CSLC models for this purpose.

\paragraph{Conditional strong log-concavity.}
We recall that a distribution $p$ is strongly log-concave (SLC) if there exists $\beta[p] \geq \alpha[p] > 0$ such that
\begin{equation}
\alpha[p] \Id \preceq -\nabla_x^2 \log p(x) \preceq \beta[p] \Id,\ \forall x.
\end{equation}
\begin{definition}
    \label{def:marginal_logconcave}
    We say that $p(x)  = p(x_J) \prod_{j=1}^J p(\bar{x}_j |x_j)$ is \emph{conditionally strongly log-concave} (CSLC) if each $p(\bar x_j|x_j)$ is strongly log-concave in $\bar{x}_j$ for all $x_j$.
\end{definition}
Conditional log-concavity is a weaker condition than (joint) log-concavity. If $p(x)$ is log-concave, then it has a convex support. On the other hand, conditional log-concavity only constraints slices (through conditioning) and projections (through marginalization) of the support of $p(x)$. \Cref{fig:logconcave_examples} illustrates that a jointly log-concave distribution is conditionally log-concave (and $p(x_J)$ is furthermore log-concave), but the converse is not true. Conditional log-concavity also depends on the choice of the orthogonal projections $G_j$ and $\bar G_j$ which need to be adapted to the data. A major issue is to identify projectors that define a CSLC decomposition, if it exists. We show in \Cref{sec:section4} that this can be achieved for a class of physical fields with wavelet packet projectors.

\begin{figure}
    \centering
    \null\hfill
    \includegraphics[width=0.475\linewidth]{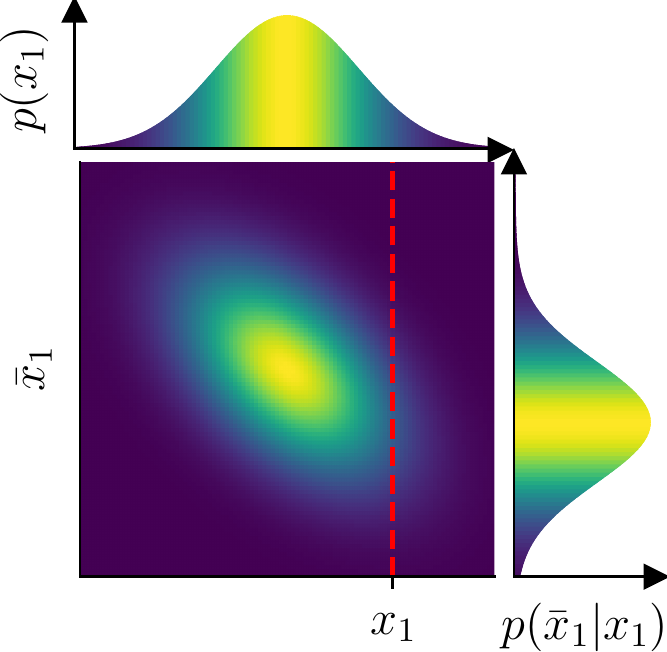}
    \hfill
    \includegraphics[width=0.475\linewidth]{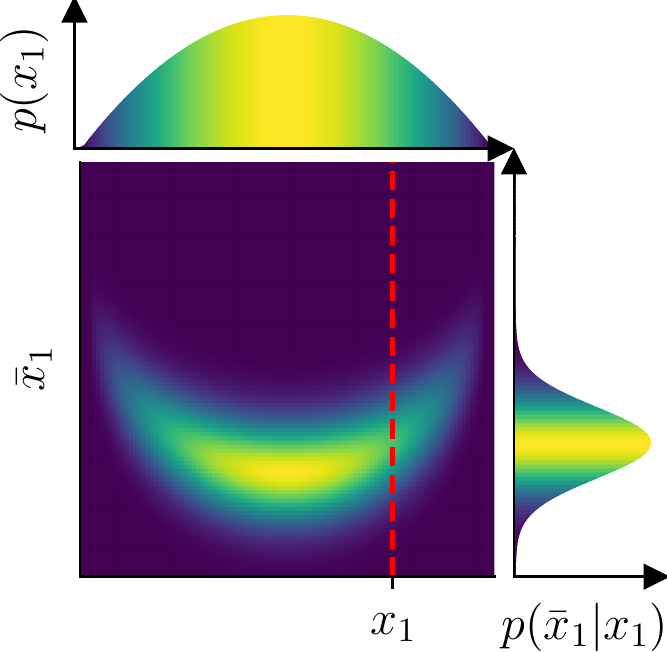}
    \hfill\null
    \null\hfill
    \includegraphics[width=0.475\linewidth]{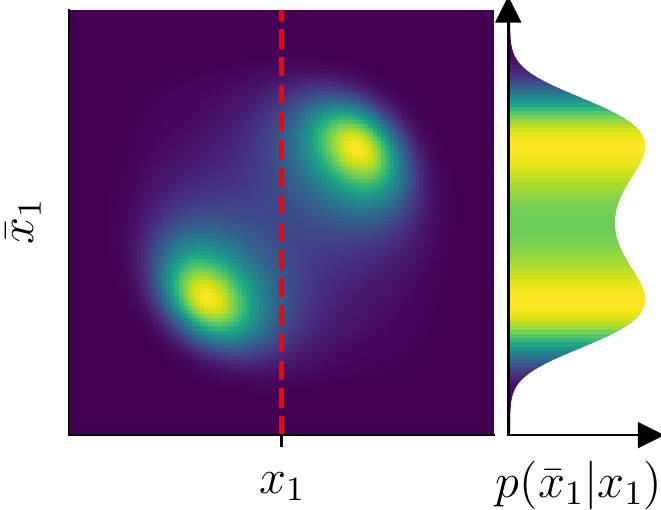}
    \hfill
    \includegraphics[width=0.475\linewidth]{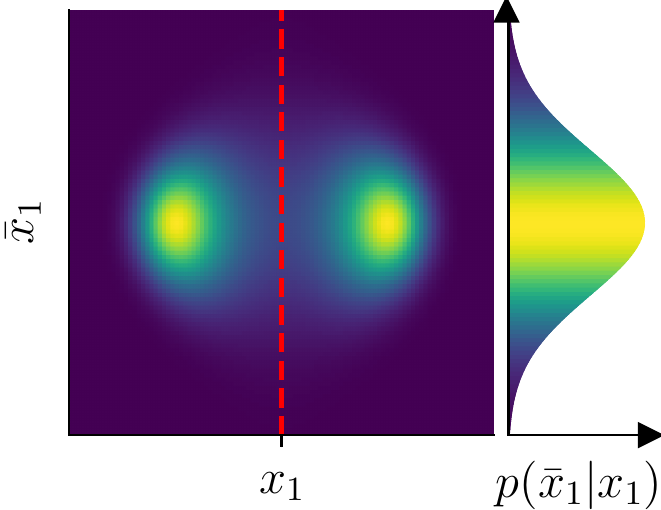}
    \hfill\null
    \caption{A globally log-concave distribution is conditionally log-concave (\emph{top left}), but the converse is not true (\emph{top right}): a non-convex support can have convex vertical slices (and horizontal projection). Conditional log-concavity also depends on the choice of orthogonal projectors: a distribution can fail to be conditionally log-concave in the canonical basis (\emph{bottom left}) but be conditionally log-concave after a rotation of $45$ degrees (\emph{bottom right}).}
    \label{fig:logconcave_examples}
\end{figure}

The following subsections provide bounds on the learning and sampling errors $\bar\epsilon^L_j$ and $\bar\epsilon^S_j$ for CSLC models. To simplify notations, in the following we drop the index $j$ and replace $p_{\bar \theta_j}(\bar x_j | x_j)$ with $p_{\bar \theta}(\bar x | x)$. We shall suppose that the dimension $d_J = \dim(x_J)$ is sufficiently small so that $x_J$ can be modeled and generated with any standard algorithm with small errors $\epsilon^L_J$ and $\epsilon^S_J$ ($d_J = 1$ in our numerical experiments).

\subsection{Learning Guarantees with Score Matching}
\label{sec:conditional_learning}

Fitting probabilistic models $p_{\bar\theta}(\bar{x}|x)$ by directly minimizing the $\mathrm{KL}$ errors $\bar{\epsilon}^L$ is computationally challenging because of intractable normalization constants. Strong log-concavity enables efficient yet accurate learning via a tight relaxation to score matching.

There exist several frameworks to fit a parametric probabilistic model to the data, most notably the maximum-likelihood estimator of a general energy-based model $\pii_{\bar \theta}(\bar{x}|x) = Z_{\bar\theta}^{-1}(x) e^{-\bar{E}_{\bar\theta}(x,\bar{x})}$, where $\bar{E}_{\bar\theta}$ is an arbitrary parametric class. 
This is computationally expensive due to the need to estimate the gradients of the normalization constants $-\nabla_{\bar\theta} \log Z_{\bar\theta} = \expectt[\pii_{\bar\theta}]{\nabla_{\bar\theta} \bar{E}_{\bar\theta}}$ during training, which requires the ability to sample from $p_{\bar\theta}(\bar x|x)$. 
An appealing alternative which has enjoyed recent popularity is \emph{score matching} \citep{hyvarinen2005estimation}, which instead minimizes the Fisher Divergence $\mathrm{FI}$:
\begin{align*}
    \ell(\bar\theta)
    &= \expect[x]{\frac12\mathrm{FI}_{\bar x}(\pii(\bar x|x) \,\|\, \pii_{\bar\theta}(\bar x|x))} \\ 
    &= \expect[x,\bar x]{\frac12\| {-\nabla_{\bar{x}} \log p(\bar{x}|x)} - \nabla_{\bar{x}} \bar{E}_{\bar\theta}(x,\bar{x})\|^2}.    
\end{align*}
With a change of variables we obtain
\begin{equation}
    \label{eq:lossfunctionscorematching}
    \ell(\bar\theta) = \expect[x,\bar x]{\frac12\|\nabla_{\bar{x}} \bar{E}_{\bar\theta} \|^2 - \Delta_{\bar{x}} \bar{E}_{\bar\theta}} + \mathrm{cst},
\end{equation}
showing that $\ell({\bar\theta})$ can be minimized from available samples without estimating normalizing constants or sampling from $p_{\bar\theta}$. 
Indeed, given i.i.d.\ samples $\{(\bar{x}^{1},x^{1}), \dots, (\bar{x}^{n},x^{n}) \}$ from $p(\bar{x},x)$, the empirical risk $\hat\ell(\bar \theta)$ associated with score matching on $p(\bar x|x)$ is given by
\begin{align}
\label{eq:SM_erm}
    \null\!\!\!\hat{\ell}({\bar\theta}) &=\frac1n \sum_{i=1}^n \paren{ \frac12 \|\nabla_{\bar x} \bar{E}_{\bar\theta}({x}^{i},\bar{x}^{i})\|^2 - \Delta_{\bar x} \bar{E}_{\bar\theta}({x}^{i}, \bar{x}^{i} ) } .
\end{align}

The score-matching objective avoids the computational barriers associated with normalization and sampling in high-dimensions, at the expense of defining a weaker metric than the KL divergence. This weakening of the metric is quantified by the log-Sobolev constant $\rho[p]$ associated with $\pii$. It is the largest $\rho>0$ such that $\mathrm{KL}(q\,\|\,\pii) \leq \frac{1}{2\rho}\mathrm{FI}(q\,\|\,\pii)$ for any $q$. Learning via score matching can therefore be seen as a relaxation of maximum-likelihood training, whose tightness is controlled by the log-Sobolev constant of the hypothesis class \citep{koehler2022statistical}. This constant can be exponentially small for general multimodal distributions, making this relaxation too weak. A crucial exception, however, is given  by SLC distributions (or small perturbations of them), as shown by the Bakry-Emery criterion \citep[Definition 1.16.1]{bakry2014analysis}: if $\alpha[p_{{{\bar\theta}}}(\bar x | x) ]\geq \bar{\alpha}>0$ for all $x$, or equivalently if $\nabla^2_{\bar x} \bar{E}_{\bar\theta} \succeq \bar{\alpha} \Id$ for all $x,\bar x$, then $\rho[p_{{{\bar\theta}}}(\bar x|x)] \geq \bar{\alpha}$ for all $x$, and therefore
\begin{equation}
\label{eq:bakry}
     \bar\epsilon^L \leq \frac{1}{\bar{\alpha}} \ell(\bar\theta). 
\end{equation}
We remark that while \cref{eq:bakry} does not make explicit CSLC assumptions on the reference distribution $p$, a consistent learning model implies that the conditional distribution $p(\bar{x} | x)$ is arbitrarily well approximated (in $\mathrm{KL}$ divergence) with SLC distributions---thus justifying the structural CSLC assumption on the target.

\subsection{Score Matching with Exponential Families}
\label{sec:sm_exponential}

In numerical applications, one cannot minimize the true score-matching loss $\ell$ as only a finite amount of data is available. We now show that a similar control as \cref{eq:bakry} can be obtained for the empirical loss minimizer, whenever prior information enables us to define low-dimensional exponential models for $p_{\bar \theta}(\bar{x} | x)$ with good accuracy. It also provides a control on the critical parameter $\bar{\alpha}$, addressing the optimization and statistical errors.

We consider a linear model $\bar{E}_{\bar\theta}(x,\bar{x}) = {\bar\theta}\trans \bar{\Phi}(x,\bar{x})$ with a fixed potential vector 
$\bar \Phi(x,\bar x) \in \mathbb{R}^{m}$ ($m$ is thus the number of parameters), and the corresponding minimization of the (conditional) score matching objective in \cref{eq:SM_erm}. Thanks to this linear parameterization, it becomes a convex quadratic form $\hat{\ell}({\bar\theta}) = \frac12 {\bar\theta}\trans \hat{H} {\bar\theta} - {\bar\theta}\trans \hat{g}$, with
\begin{align*}
    \hat{H} &= \frac1n\sum_{i=1}^n {\nabla_{\bar x} \bar\Phi(x^{i},\bar{x}^{i}) \nabla_{\bar x} \bar\Phi(x^{i},\bar{x}^{i}) \trans} \in \R^{m\times m}, \\
    \hat{g} &= \frac1n\sum_{i=1}^n {\Delta_{\bar x} \bar \Phi(x^{i},\bar{x}^{i})} \in \R^m.   
\end{align*}
It can be minimized in closed-form by inverting the Hessian matrix: ${\hat{\bar\theta}} = \hat{H}^{-1} \hat{g}$. As discussed, the sampling and learning guarantees of the model critically rely on the CSLC property, which is ensured as long as ${\hat{\bar\theta}} \in \Theta_{\bar{\alpha}} := \{ {\bar\theta} \,|\, \nabla_{\bar{x}}^2 \bar{E}_{\bar\theta}(x,\bar{x}) \succeq \bar{\alpha} \Id ,\ \forall\,(x,\bar{x}) \}$ with $\bar{\alpha}>0$.

The following theorem leverages the finite-dimensional linear structure of the score-matching problem to establish fast non-asymptotic rates of convergence, controlling the excess risk \emph{in $\mathrm{KL}$ divergence}. 
\begin{restatable}[Excess risk for CSLC exponential models]{theorem}{SMexpo}
\label{prop:learn2}
    Let ${\bar\theta}^\star=\arg\min \ell({\bar\theta})$ and $\hat{\bar\theta} = \arg\min \hat\ell(\bar\theta)$. Assume:
    \begin{enumerate}[label={(\roman*)}]
    \item $\bar\theta^\star \in \Theta_{\bar \alpha}$ for some $\bar \alpha>0$, 
    \item ${H}=\expect{\nabla_{\bar x} \bar\Phi \nabla_{\bar x} \bar\Phi\trans} \succeq \eta \Id$ with $\eta>0$,
    \item the sufficient statistics $\bar{\Phi}$ satisfy moment conditions \ref{ass:normbound_our}, regularity conditions \ref{ass:smallball}, and $\nabla \bar{\Phi}_k(x,\bar x)$ is $M_{\bar{\Phi}}$-Lipschitz for any $k\leq m$ and all $x$ (see \Cref{sec:proofs_sec3}).
    \end{enumerate}
    Then when $n > m$, the empirical risk minimizer $\hat{\bar\theta}$ satisfies
    \begin{equation}
    \label{eq:scprop_main}
        \hat{\bar\theta} \in \Theta_{\hat{\bar\alpha}}\ \text{with } \expect[(\bar x^{i}, x^{i})]{\hat{\bar\alpha}} \geq \bar\alpha - O\left(\eta^{-1} \sqrt{\frac{m}{n}}\right),
    \end{equation}
    and, for $t \ll \sqrt{m} \ell(\bar{\theta}^\star)$,
    \begin{equation}\label{eq:KLgen_main}
       \bar{\epsilon}^L \leq \frac{\ell({\bar\theta}^\star)}{\bar \alpha}(1+t)
    \end{equation}
    with probability greater than $1 - \exp\left\{ -O(n \log (tn/\sqrt{m}))\right\} $ over the draw of the training data. The constants in $O(\cdot)$ only depend on moment and regularity properties of $\bar{\Phi}$.
\end{restatable}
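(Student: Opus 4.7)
The proof combines three ingredients: concentration of the empirical quadratic form defining $\hat\ell$ around its population version, a perturbation analysis transferring that concentration to control $\hat{\bar\theta} - \bar\theta^\star$, and the Bakry--Émery inequality applied to $p_{\hat{\bar\theta}}$ to convert the score-matching excess risk into a conditional KL bound.

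\textbf{Step 1 (concentration of $\hat H$ and $\hat g$).} Both $\hat H - H$ and $\hat g - g$ are centered empirical averages of i.i.d.\ random matrices/vectors built from $\nabla_{\bar x}\bar\Phi$ and $\Delta_{\bar x}\bar\Phi$. Combining the moment conditions \ref{ass:normbound_our} with the regularity conditions \ref{ass:smallball}, a matrix Bernstein / uniform-concentration argument yields $\|\hat H - H\|_{\mathrm{op}} = O(\sqrt{m/n})$ and $\|\hat g - g\| = O(\sqrt{m/n})$, on an event whose complement has probability at most $\exp\{-O(n\log(tn/\sqrt{m}))\}$. The unusual $\log(tn/\sqrt m)$ factor in the exponent reflects a chaining / peeling step over the scale $t$, which I expect to be the only genuinely technical estimate.

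\textbf{Step 2 (well-posedness and perturbation).} Since $H\succeq\eta\Id$ and $\|\hat H-H\|_{\mathrm{op}}\ll\eta$ on the concentration event, $\hat H$ is invertible with $\hat H\succeq (\eta/2)\Id$. Using $g=H\bar\theta^\star$, the closed-form $\hat{\bar\theta}=\hat H^{-1}\hat g$ gives
\begin{equation*}
\hat{\bar\theta} - \bar\theta^\star = \hat H^{-1}\bigl((\hat g - g) + (H-\hat H)\bar\theta^\star\bigr),
\end{equation*}
hence $\|\hat{\bar\theta} - \bar\theta^\star\| = O(\eta^{-1}\sqrt{m/n})$.

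\textbf{Step 3 (claim (i), lower bound on $\hat{\bar\alpha}$).} Because the model is linear in $\bar\theta$, $\nabla^2_{\bar x}\bar E_{\hat{\bar\theta}}-\nabla^2_{\bar x}\bar E_{\bar\theta^\star}=\sum_k(\hat{\bar\theta}_k-\bar\theta^\star_k)\nabla^2_{\bar x}\bar\Phi_k$. The Lipschitz hypothesis on $\nabla\bar\Phi_k$ bounds $\|\nabla^2_{\bar x}\bar\Phi_k\|_{\mathrm{op}}\leq M_{\bar\Phi}$, and combining with Step 2 (together with the moment bounds that let one trade the $\sqrt m$ from the $\ell^1/\ell^2$ conversion for constants depending only on $\bar\Phi$) yields $\nabla^2_{\bar x}\bar E_{\hat{\bar\theta}}\succeq(\bar\alpha - O(\eta^{-1}\sqrt{m/n}))\Id$ in expectation, which is the stated bound on $\EE[\hat{\bar\alpha}]$.

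\textbf{Step 4 (claim (ii), KL bound).} On the concentration event of Step 1, $\hat{\bar\theta}\in\Theta_{\hat{\bar\alpha}}$, so applying the Bakry--Émery bound \cref{eq:bakry} to $p_{\hat{\bar\theta}}$ gives $\bar\epsilon^L\leq \ell(\hat{\bar\theta})/\hat{\bar\alpha}$. Since $\ell$ is a quadratic with Hessian $H$,
\begin{equation*}
\ell(\hat{\bar\theta}) = \ell(\bar\theta^\star) + \tfrac12(\hat{\bar\theta}-\bar\theta^\star)\trans H(\hat{\bar\theta}-\bar\theta^\star) = \ell(\bar\theta^\star) + O(m/n),
\end{equation*}
using Step 2 and $\|H\|_{\mathrm{op}}$ controlled by the moment assumptions. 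Combining with $\hat{\bar\alpha}\geq \bar\alpha(1-O(\sqrt{m/n}))$ from Step 3,
\begin{equation*}
\bar\epsilon^L \leq \frac{\ell(\bar\theta^\star) + O(m/n)}{\bar\alpha - O(\sqrt{m/n})} = \frac{\ell(\bar\theta^\star)}{\bar\alpha}(1+t),
\end{equation*}
where $t$ absorbs both the multiplicative correction from $\hat{\bar\alpha}/\bar\alpha$ and the additive excess risk $O(m/n)/\ell(\bar\theta^\star)$; the latter term is what forces the scaling restriction $t\ll\sqrt m\,\ell(\bar\theta^\star)$.

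\textbf{Main obstacle.} The routine parts are Steps 2 and 4, which are standard for linear least-squares problems once concentration is in hand. The hard part is Step 1: obtaining the precise tail $\exp\{-O(n\log(tn/\sqrt m))\}$ rather than the usual $\exp\{-O(t^2 n)\}$ requires exploiting the full sub-exponential moment structure of $\bar\Phi$ under \ref{ass:normbound_our}--\ref{ass:smallball} via a uniform-in-$t$ chaining argument, and doing so in a way that the prefactor hiding in $O(\sqrt{m/n})$ does not pick up extra polynomial-in-$m$ factors when translated to the operator-norm perturbation of $\nabla^2_{\bar x}\bar E_{\hat{\bar\theta}}$ in Step 3.
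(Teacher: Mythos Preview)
Your high-level architecture---concentrate, perturb, transfer to the Hessian via Lipschitzness, then apply Bakry--\'Emery---matches the paper's. But there is a real gap in Step~1, and it propagates to Step~3. You treat Assumption~\ref{ass:smallball} as one more input to an upper-tail concentration argument (``matrix Bernstein / uniform-concentration''). In the paper it plays the opposite role: it is an \emph{anticoncentration} (small-ball) hypothesis, used via Mourtada's lower-tail bound to guarantee $\hat H \succeq tH$ with probability at least $1-(\tilde C t)^{\nu n/6}$. This is what makes $\hat H^{-1}$ integrable. Your high-probability statement $\hat H \succeq (\eta/2)\Id$ says nothing about $\EE\|\hat H^{-1}\|$; on the complementary event $\hat H$ could nearly degenerate and $\|\hat H^{-1}\|$ blow up. The paper therefore combines the upper tail (Vershynin-type covariance concentration, from Assumption~\ref{ass:normbound_our}) with this lower tail in a tails-to-expectation lemma to obtain $\EE\|\hat H^{-1}-H^{-1}\|^2=O(\|H^{-1}\|^2/n)$ and finiteness of $\EE\|\hat H^{-4}\|$. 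Your Step~3 silently assumes such moment control when it asserts the conclusion ``in expectation''; producing it is in fact the crux of the argument, and matrix Bernstein alone does not provide it.

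This also resolves what you call the main obstacle. The tail $\exp\{-O(n\log(tn/\sqrt m))\}$ does not come from chaining or peeling over $t$; it is the small-ball lower tail $(C_0/(t\sqrt n))^{\nu n/6}$ rewritten as an exponential, and for small $t$ it dominates the Gaussian-type tail coming from upper concentration. A secondary point: the $\sqrt m$ in Step~3 is not ``traded away'' by moment bounds. The paper keeps it---the Hessian perturbation is bounded by $\sqrt m\,M_{\bar\Phi}\|\hat{\bar\theta}-\bar\theta^\star\|$---and the stated $O(\eta^{-1}\sqrt{m/n})$ is exactly $\sqrt m$ times the first-moment bound $\EE\|\hat{\bar\theta}-\bar\theta^\star\|=O(\|H^{-1}\|/\sqrt n)$ obtained from the tails-to-expectation machinery. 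Steps~2 and~4 of your outline are essentially what the paper does.
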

The theorem provides learning guarantees for the empirical risk minimizer $\hat{\bar\theta}$ (compare \Cref{eq:bakry,eq:KLgen_main}), and hinges on three key properties: the ability of the exponential family to approximate the true conditionals at each block (i) with small Fisher approximation error $\ell({\bar\theta}^\star)$, (ii) with a sufficiently large strong log-concavity parameter $\bar\alpha$, and (iii) with a well-conditioned kernel $H$. In numerical applications, the number of parameters $m$ should be small enough to control the learning error for finite number of samples $n$, and to be able to compute and invert the Hessian matrix $\hat H$. We will define in \Cref{sec:section4} low-dimensional models that can approximate a wide range of multiscale physical fields.

The proof uses concentration of the empirical covariance $\hat H$, and combines both upper and lower tail probability bounds \citep{mourtada2022exact, vershynin2012close} to bound the expectation, similarly as known results for least-squares \citep{mourtada2022exact, hsu2012random}. The statistical properties of score matching under exponential families have been studied in the infinite-dimensional setting by \citet{sriperumbudur2013density,sutherland2018efficient}, where kernel ridge estimators achieve non-parametric rates $n^{-s}$, $s<1$. Compared to these, as an intermediate result, we achieve the optimal rate in $\mathrm{FI}$ divergence in $n^{-1}$ directly with the ridgeless estimator (\Cref{eq:FIgen}). The key assumption is {\it (i)}, namely that the optimal model in the exponential family is SLC. Since our structural assumption on the target $p$ is precisely that its conditionals are SLC, it is reasonable to expect this to be generally true. For instance, this is the case if the model is well specified ($p = p_{\bar\theta^\star}$).

\subsection{Sampling Guarantees with MALA}
\label{sec:conditional_sampling}

We illustrate the efficient sampling properties of CSLC distributions by focusing on a reference sampler given by the Metropolis-Adjusted Langevin Algorithm (MALA) with algorithmic warm-start, which enjoys well-understood convergence properties in this case: 
\begin{restatable}[MALA Sampling, {{\citet[Theorem 5.1]{altschuler-chewi-2023-mala-algorithm-warm-start}}}]
{proposition}{malasampling}
    \label{prop:malascaling}
    Suppose that $\bar\alpha \Id \preceq \nabla^2_{\bar x} \bar E_{\bar\theta}(\bar{x}|x) \preceq \bar\beta \Id$ for all $\bar{x},x$, and let $\bar d = \mathrm{dim}(\bar{x})$. Then $N$ steps of MALA produce a sample $\bar{x}$ with conditional law $\hat{p}_{\bar \theta}(\bar{x}|x)$ satisfying
    \begin{equation*}
        \bar\epsilon^S \leq \exp\paren{-O\paren{\sqrt{\frac{N}{\sqrt{\bar d} \bar\beta/\bar\alpha}}}}.
    \end{equation*}
\end{restatable}

MALA can thus be used to sample from CSLC distributions with an exponential convergence, whose mixing time $\widetilde O(\sqrt{\bar d} \bar\beta/\bar \alpha)$ is sublinear in the dimension $\bar d$ and linear in the condition number $\bar\beta/\bar\alpha$ of the Hessian $\nabla^2_{\bar x} \bar E_{\bar \theta}$.
We also note that similar guarantees will hold for other high-precision Metropolis-Hastings samplers, such as Hamilton Monte-Carlo. Together, \Cref{prop:global_error,prop:malascaling} and \Cref{prop:learn2} imply a control on the total accumulated error for CSLC exponential models.

\section{Wavelet Packet Conditional Log-Concavity}
\label{sec:section4}

The CSLC property depends on the choice of the projectors $(\bar{G}_j,G_j)$ which need to be adapted to the data. We show that for a class of stationary multiscale physical processes, CSLC models can be obtained with wavelet packet projectors. These models exploit the dominating quadratic interactions at high frequencies by splitting the frequency domain in sufficiently narrow bands. It reveals a powerful mathematical structure in this class of complex distributions.

\subsection{Energies with Scalar Potentials}

In the following, $x \in \R^d$ is a $\sqrt{d} \times \sqrt{d}$ image or two-dimensional field. We denote $x[i]$ the value of $x$ at pixel or location $i$. An important class of stationary probability distributions $p(x) = Z^{-1} e^{-E(x)}$ are defined in physics from an energy composed of 
a two-point interaction term $K$ plus a potential that is a sum of scalar potentials $v$:
\begin{equation}
\label{scal-en}
E(x) = \frac 1 2 x\trans K x + \sum_i v(x[i]) .
\end{equation}
The matrix $K$ is a positive symmetric convolution operator. \Cref{scal-en} generalizes both zero-mean Gaussian processes (if $v = 0$ then $K$ is the inverse covariance) and distributions with i.i.d.\ components (if $K=0$ then $v$ is the negative log-density of the pixel values). The energy Hessian is given by
\begin{equation}
    \label{eq:scalar_potential_hessian}
    \nabla^2_x E(x) = K + \diag\paren{v''(x[i])}_i.
\end{equation}
If $v''(t) < 0$ for some $t \in \R$ then we may get negative eigenvalues for some $x$, in which case the energy is not convex. 

\Cref{scal-en} provides models of a wide class of physical phenomena \citep{marchand_wavelet_2022}, including ferromagnetism. An important example is the $\varphi^4$ energy in physics, which is a non-convex energy allowing to study phase transitions and explain the nature of numerical instabilities \citep{10.1093/oso/9780198834625.001.0001}. It has a kinetic energy term defined by $K = -\beta \Delta$ where $\Delta$ is a discrete Laplacian that enforces spatial regularity, and its scalar potential is $v(t) = t^4 - (1+2\beta)t^2$. It has a double-well shape which pushes the values of each $x[i]$ towards $+1$ and $-1$, and is thus non-convex. $\beta$ is an inverse temperature parameter. In the thermodynamic limit $d \to \infty$ of infinite system size, the $\varphi^4$ energy has a phase transition at $\beta_c\approx 0.68$ \citep{doi:10.1142/S0129183116501084}. At small temperature ($\beta \geq \beta_c$), the local interactions in the energy give rise to long-range dependencies. Gibbs sampling then ``critically slows down'' \citep{article,Sethna2021StatisticalME} due to these long-range dependencies.

Fast sampling can nevertheless be obtained by exploiting conditional strong log-concavity. Assume that there exists $\gamma > 0$ such that $v''(t) \geq -\gamma$ for all $t \in \R$. It then follows that $\nabla^2_x E \succeq K - \gamma \Id$. We can thus obtain a convex energy by restricting $K$ over a subspace where its eigenvalues are larger than $\gamma$. The convolution $K$ is diagonalized by the Fourier transform, with positive eigenvalues that we write $\hat K(\omega)$ at all frequencies $\omega$. The value $\hat K(\omega)$ typically increases when the frequency modulus $|\omega|$ increases. A convex energy is then obtained with a projector over a space of high-frequency images, as shown in the following proposition.

\begin{proposition}[Conditional log-concavity of scalar potential energies]
    \label{th:conditionally-log-concave-scalar-potential}
    Consider the energy defined in \cref{scal-en} and assume that $-\gamma \leq v'' \leq \delta$ for some $\gamma, \delta > 0$ and that $\hat K(\omega) = \lambda |\omega|^\eta$ for some $\eta > 0$. Let $\bar G_1$ be an orthogonal projector over a space of signals whose Fourier transform have a support included over frequencies $\omega$ such that $|\omega| \geq |\omega_0|$ with $|\omega_0| > (\gamma/\lambda)^{1/\eta}$. Then the conditional probability $p(\bar x_1 | x_1)$ is strongly log-concave for all $x_1$.
\end{proposition}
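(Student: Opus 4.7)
The plan is to directly compute the Hessian of $-\log p(\bar x_1 | x_1)$ with respect to $\bar x_1$ and sandwich it between two positive multiples of $\Id$. Because $(x_1,\bar x_1)\mapsto x = G_1\trans x_1 + \bar G_1\trans \bar x_1$ is an orthogonal change of variables (unit Jacobian), the joint log-density of $(x_1,\bar x_1)$ equals $-E(x)$ up to an additive constant, and the conditional log-density $-\log p(\bar x_1|x_1)$ agrees with $E(G_1\trans x_1 + \bar G_1\trans \bar x_1)$ up to a function of $x_1$ alone. The chain rule then gives
\begin{equation*}
    -\nabla^2_{\bar x_1} \log p(\bar x_1 | x_1) = \bar G_1\, \nabla^2_x E(x)\, \bar G_1\trans,
\end{equation*}
evaluated at $x = G_1\trans x_1 + \bar G_1\trans \bar x_1$.

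Next I plug in \cref{eq:scalar_potential_hessian} and use the pointwise sandwich $-\gamma \Id \preceq \diag(v''(x[i])) \preceq \delta \Id$, which follows from $-\gamma \leq v'' \leq \delta$. Conjugation by $\bar G_1$ preserves the semidefinite ordering, and moreover $\bar G_1 \bar G_1\trans = \Id_{\bar d_1}$, because the identity $G_1\trans G_1 + \bar G_1\trans \bar G_1 = \Id$ forces $G_1$ and $\bar G_1$ to be (rectangular) coisometries with complementary row spaces. Therefore
\begin{equation*}
    \bar G_1 K \bar G_1\trans - \gamma \Id \preceq -\nabla^2_{\bar x_1} \log p(\bar x_1|x_1) \preceq \bar G_1 K \bar G_1\trans + \delta \Id.
\end{equation*}

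The last step is to analyze $\bar G_1 K \bar G_1\trans$ spectrally. By hypothesis the range of $\bar G_1\trans$ is contained in the subspace of signals with Fourier support in $\{|\omega|\geq |\omega_0|\}$, which is an invariant subspace of the convolution $K$. On it, $K$ acts with eigenvalues $\hat K(\omega) = \lambda|\omega|^\eta \geq \lambda|\omega_0|^\eta$, so $\bar G_1 K \bar G_1\trans \succeq \lambda|\omega_0|^\eta\,\Id$ (and is bounded above by the finite $\lambda_{\max}(K)$). Combining this with the previous sandwich and the hypothesis $|\omega_0| > (\gamma/\lambda)^{1/\eta}$ gives
\begin{equation*}
    (\lambda|\omega_0|^\eta - \gamma)\,\Id \preceq -\nabla^2_{\bar x_1}\log p(\bar x_1|x_1) \preceq (\lambda_{\max}(K) + \delta)\,\Id,
\end{equation*}
uniformly in $(x_1,\bar x_1)$, with the lower bound strictly positive. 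This is exactly the SLC condition for $p(\bar x_1|x_1)$ for every $x_1$.

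I do not anticipate any substantial obstacle: the argument is essentially the observation that on high-frequency directions the quadratic part $K$ strictly dominates the bounded-below scalar-potential perturbation. The only point deserving care is the middle step, where one must recognize that $\bar G_1$ is a (rectangular) coisometry rather than a true unitary, so that $\bar G_1\bar G_1\trans$ collapses to the identity on $\R^{\bar d_1}$ and the operator inequalities transfer cleanly; beyond that, everything reduces to the Fourier diagonalization of $K$ and the monotonicity of $\hat K$.
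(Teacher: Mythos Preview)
Your proposal is correct and follows essentially the same approach as the paper: compute the conditional Hessian as $\bar G_1\,\nabla^2_x E(x)\,\bar G_1\trans$, bound the scalar-potential part via $-\gamma \leq v'' \leq \delta$ together with $\bar G_1\bar G_1\trans=\Id$, and bound $\bar G_1 K\bar G_1\trans$ below by $\lambda|\omega_0|^\eta\Id$ using the Fourier support assumption. The paper writes the upper bound as $\lambda\Omega^\eta+\delta$ with $\Omega$ the maximal frequency, which is just your $\lambda_{\max}(K)$.
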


The proof is in \Cref{sec:proof_th_cslc} and relies on a direct calculation of the Hessian of the conditional energy. This proposition proves that we obtain a strongly log-concave conditional distribution $p(\bar x_1 | x_1)$ with a sufficiently high-frequency filter $\bar G_1$. It is illustrated in the bottom row of \Cref{fig:logconcave_examples} on a simplified two-dimensional example inspired from the $\varphi^4$ energy. The distribution has two modes $x=(1,1)$ and $x=(-1,-1)$, and the Fourier coefficients are computed with a $45$ degrees rotation: $x_1 = (x[1] + x[2])/\sqrt{2}$ and $\bar x_1 = (x[2] - x[1])/\sqrt{2}$, which leads to a log-concave conditional distribution.

Multiscale physical fields with scalar potential energies (\ref{scal-en}) are often self-similar over scales, in the sense that lower-frequency fields $x_j$ can also be described with an energy in the form of \cref{scal-en}, with different parameters \citep{wilson1971renormalization}. This explains why \Cref{th:conditionally-log-concave-scalar-potential} can be iterated to obtain a CSLC decomposition. 
For $\varphi^4$ energies, the range of $\bar G_1$ is non-empty as soon as $\beta \geq \frac12$, which includes the critical temperature $\beta_c \approx 0.68$ (though $\delta=\infty$). At the critical temperature, $x_1$ is further described by the same parameters $K$ and $v$ as $x$, so that a complete CSLC decomposition is obtained by iteratively selecting projectors $\bar G_j$ which isolate the highest frequencies of $x_{j-1}$.

\Cref{th:conditionally-log-concave-scalar-potential} can be extended to general energies 
\begin{equation*}
    E(x) = \frac12 x\trans K x + V(x),    
\end{equation*}
by assuming that the Hessian $\nabla^2 V(x)$ is bounded above and below. Conditional log-concavity may then be found by exploiting dominating quadratic energy terms with a PCA of $K$. We believe that this general principle may hold beyond the case of scalar potential energies (\ref{scal-en}) considered here.

\subsection{Wavelet Packets and Renormalization Group}

We now define wavelet packet projectors $G_j$ and $\bar G_j$, which are orthogonal projectors on localized zones of the Fourier plane. They are computed by convolutions with conjugate mirror filters and subsamplings \cite{coifman1992wavelet}, described in \Cref{sec:wavelet_packets}. These filters perform a recursive split of the frequency plane illustrated in \Cref{plot:WP2dPhi4}.

The wavelet packet $\bar{G}_j$ is a projector on a high-frequency domain, whereas $G_j$ is a projection on the remaining lower-frequency domain. An orthogonal wavelet transform is a particular example, which decomposes the Fourier plane into annuli of about one octave bandwidth, as shown in the top left and bottom panels of \Cref{plot:WP2dPhi4}. However, it may not be sufficiently well localized in the Fourier domain to obtain strictly convex energies. The frequency localization is improved by refining this split, as illustrated on the top right panel of \Cref{plot:WP2dPhi4}. Each $\bar G_j$ then performs a projection over a frequency annulus whose bandwidth is a half octave. Wavelet packets can adjust the frequency bandwidth to $2^{-{M+1}}$ octave for any integer $M \geq 1$. It allows reducing the support of $\bar G_j$, which is necessary to obtain a CSLC decomposition according to \Cref{th:conditionally-log-concave-scalar-potential}.

\begin{figure}[t]
\centering
\includegraphics[width=0.40\textwidth]{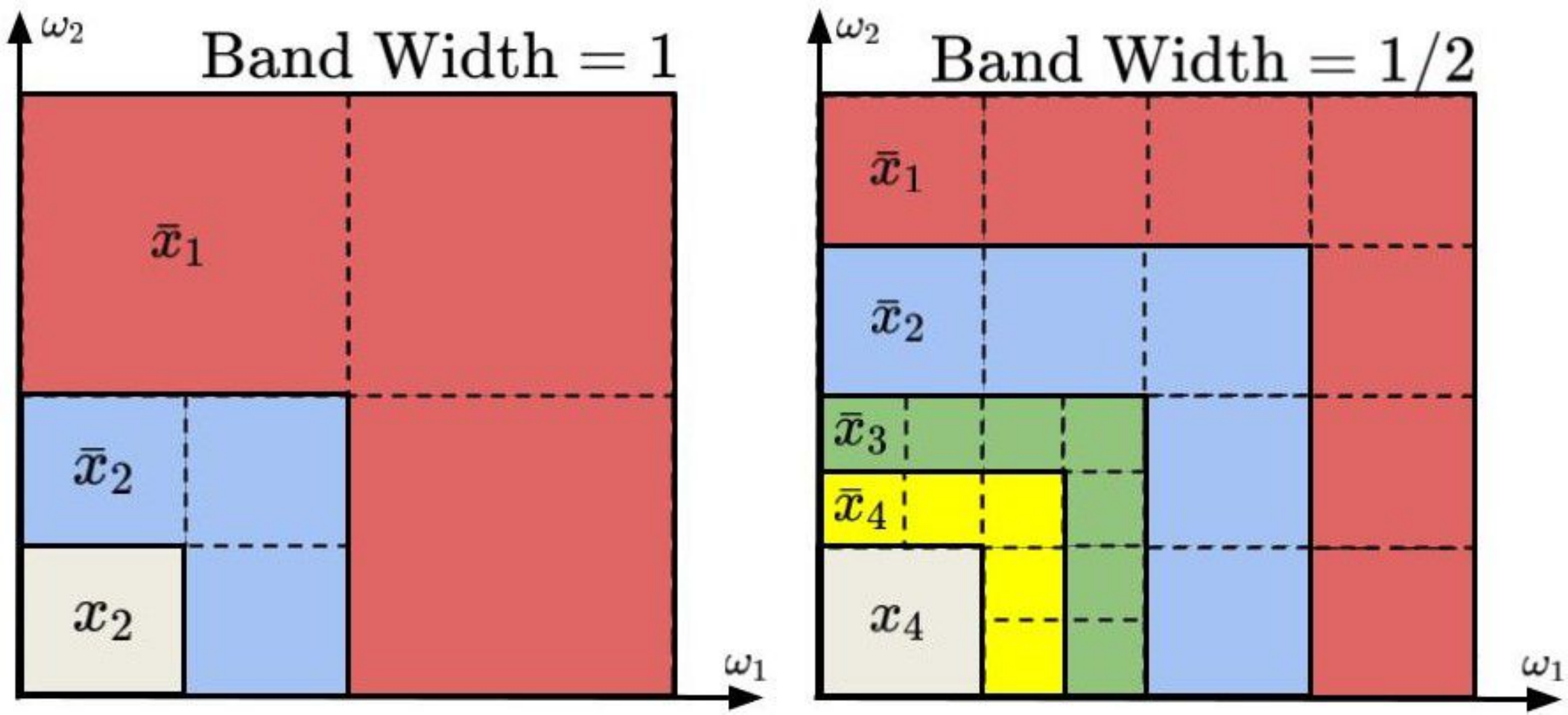}
\includegraphics[width=0.40\textwidth]{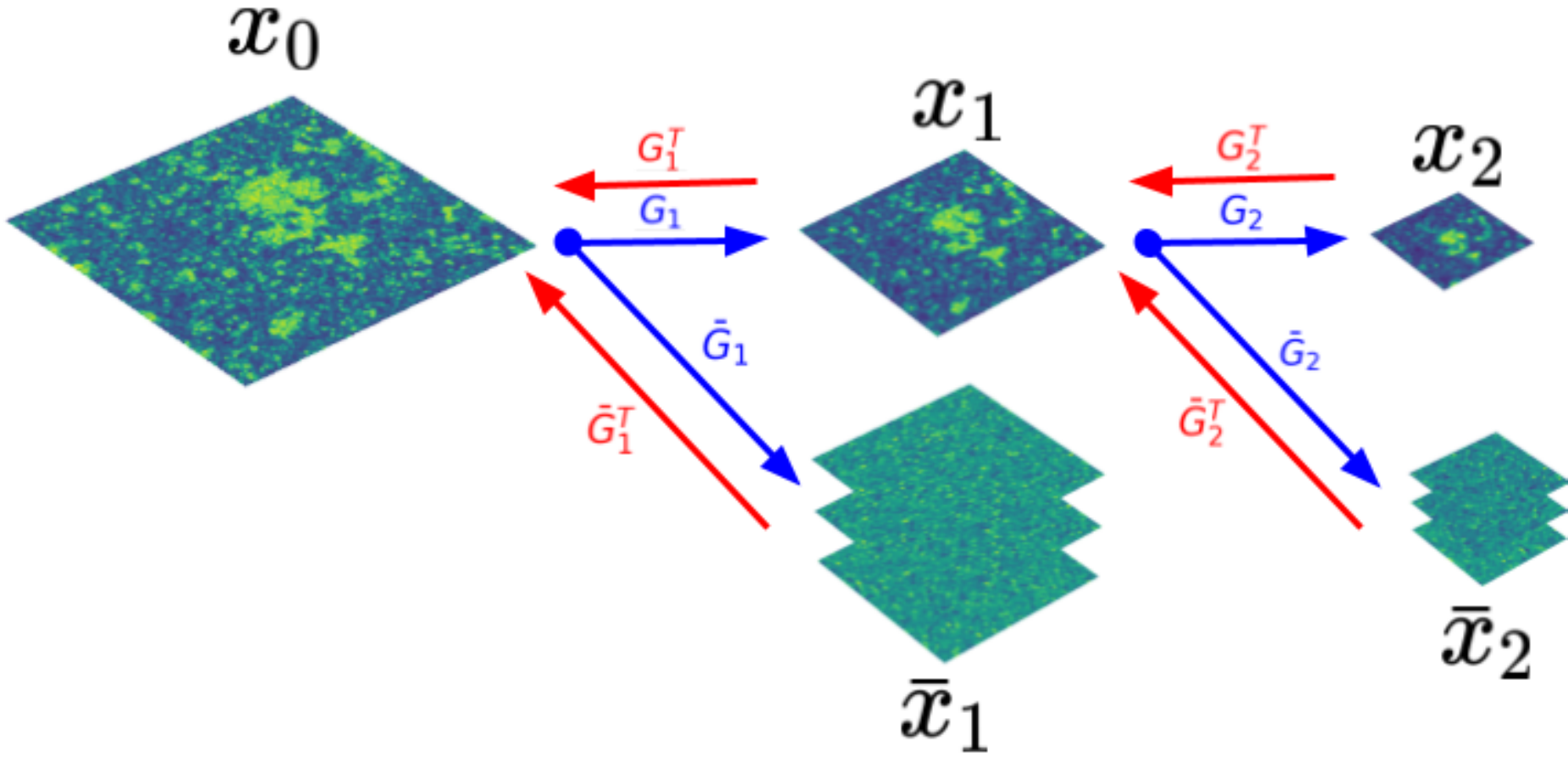}
 \caption{\emph{Top:} frequency localization of the decomposition $(x_J, \bar{x}_J, \dots, \bar{x}_1)$ with wavelet packet projectors of $1$ (\emph{left}) and $1/2$ (\emph{right}) octave bandwidths. \emph{Bottom:} iterative decomposition of $x=x_0$ with $(\bar{G}_j,G_j)$ implementing a wavelet packet transformation over $J=2$ layers of $1$ octave bandwidth.}
\label{plot:WP2dPhi4}
\end{figure}

\subsection{Multiscale Scalar Potentials}

The probability distribution $p(x)$ is approximated by $p_\theta(x) = p_{\theta_J}(x_J) \prod_{j=1}^J p_{\bar \theta_j}(\bar x_j|x_j)$, where each $x_j$ and $\bar x_j$ are computed with wavelet packet projectors $G_j$ and $\bar G_j$. We introduce a parameterization of $p_{\bar \theta_j}$ with scalar potential energies, following \citet{marchand_wavelet_2022}. We shall suppose that the dimension $d_J = \mathrm{dim}(x_J)$ is sufficiently small so that $p(x_J)$ may be approximated with any standard algorithm ($d_J = 1$ in our numerical experiments).

The self-similarity property of multiscale fields with scalar energies motivates the definition of each $p_{\bar \theta_j}(\bar x_j | x_j)$ with an interaction energy
\begin{align}
\nonumber
\bar E_{\bar \theta_j}(x_j,\bar x_j) &= \frac 1 2 \bar x_j\trans \bar K_{j} \bar x_j + \bar x_j \trans \bar K'_{j} x_j + \sum_i \bar v_j (x_{j-1}[i]) \\
&= \bar \theta_j^T \bar \Phi_j(x_j,\bar x_j),
\label{barphijeq}
\end{align}
which derives from the fact that $p(x_{j-1})$ defines an energy of the form (\ref{scal-en}) \citep{marchand_wavelet_2022}. $\bar \Phi_j$ captures the interaction terms and performs a parametrized approximation of $\bar v_j$, defined in \Cref{app:phibar}. 


The parameters $\bar \theta_j$ are estimated from samples by inverting the empirical score matching Hessian as in \Cref{sec:sm_exponential}. We generate samples from the resulting distribution $p_\theta$ by sampling from $p_{\theta_J}$ and then iteratively from each $p_{\bar \theta_j}$ with MALA. The learning and sampling algorithms are summarized in \Cref{sec:sm_mala_algorithms}. Additionally, \Cref{sec:energy_estimation} explains that a parameterized model of the global energy (\ref{scal-en}), which is crucial for scientific applications, can be recovered with free-energy score matching.

\section{Numerical Results}
\label{sec:numerics}

This section demonstrates that a wavelet packet decomposition of $\varphi^4$ scalar fields and weak-lensing cosmological fields defines strongly log-concave conditional distributions. It allows efficient learning and sampling algorithms, and leads to higher-resolution generations than in previous works.

\subsection{$\varphi^4$ Scalar Potential Energy}

We learn a wavelet packet model of $\varphi^4$ scalar fields at different temperatures, using the decomposition and models presented in \Cref{sec:section4}. The wavelet packet exploits the conditionally strongly log-concave property of $\varphi^4$ scalar fields (\Cref{th:conditionally-log-concave-scalar-potential}) to obtain a small error in the generated samples, as shown in \Cref{sec:section3}. We first verify qualitatively and quantitatively that this error is small.

We evaluate the wavelet packet model at three different temperatures, which have different statistical properties: $\beta = 0.50$, the ``disorganized'' state, $\beta = 0.68 \approx \beta_c$ the critical point, and $\beta = 0.76$ the ``organized'' state. The computational efficiency of our approach enables generating high-resolution $128 \times 128$ images, as opposed to $32 \times 32$ in \citet{marchand_wavelet_2022}. Indeed, learning the model parameters for $64\times64$ images with score matching takes seconds on GPU, whereas doing the same with maximum likelihood takes hours on CPU (as sequential MCMC steps are not easily parallelized). The generated samples are shown in \Cref{plot:Phi4Synth} and are qualitatively indistinguishable from the training data. The experimental setting is detailed in \Cref{sec:experimental_details}.

A distribution $p(x)$ having a scalar potential energy (\ref{scal-en}) is a maximum-entropy distribution constrained by second-order moments and hence by the power spectrum, and by the marginal distribution of all $x[i]$. These statistics specify the matrix $K$ and the scalar potential $v(t).$ Our model $p_\theta$ also has a scalar potential energy in this case. To guarantee that $p_\theta = p$, it is thus sufficient to show that they have the same power spectrum and same marginal distributions. We perform a quantitative validation of generated samples by comparing their marginal densities and Fourier spectrum with the training data. \Cref{plot:Phi4Synth} shows that these statistics are well recovered by our model.

\begin{figure}[t]
\centering
\includegraphics[width=0.48\textwidth]{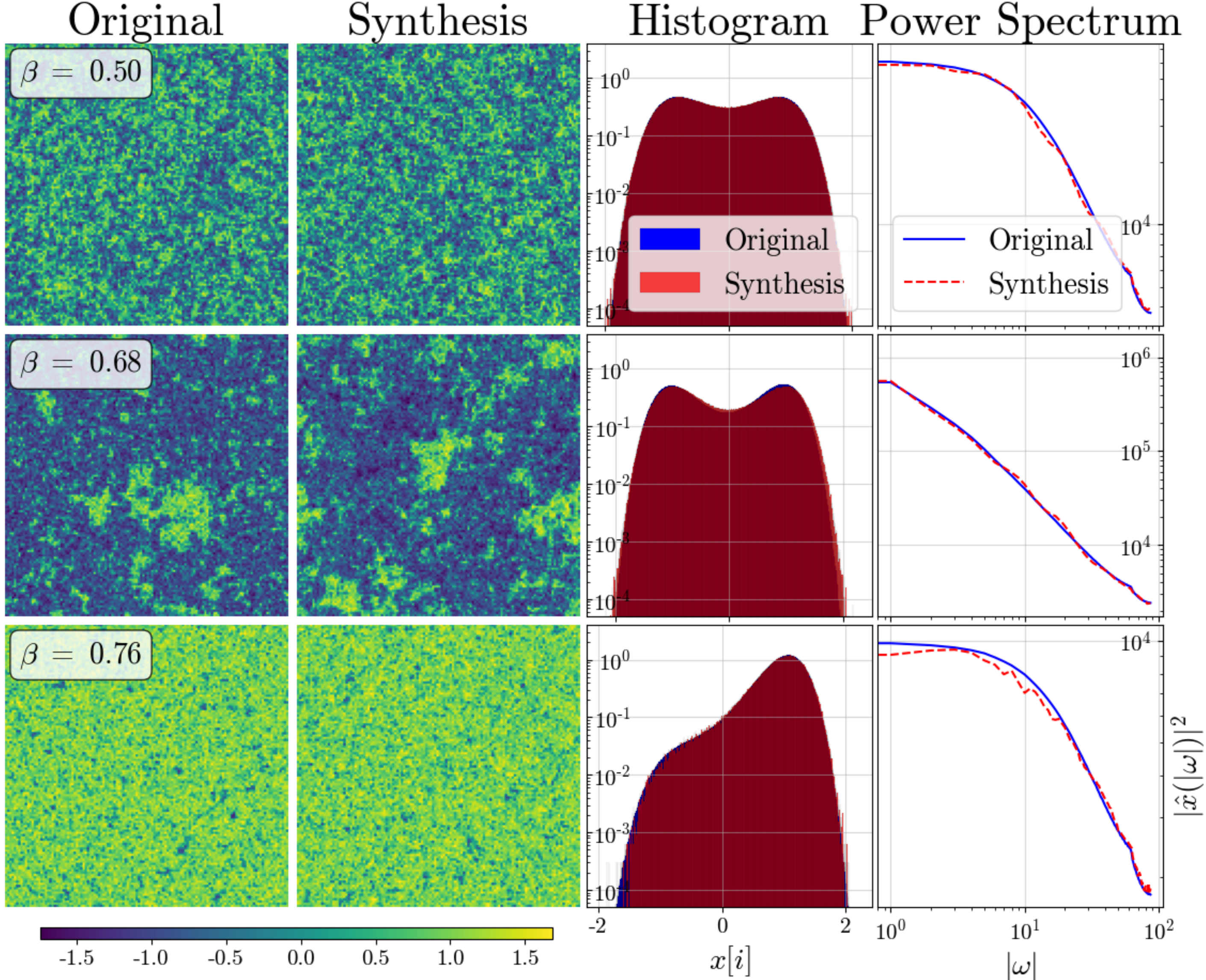}
\vspace{-0.5cm}
\caption{ Comparison between training and generated samples for $\varphi^4$ energies. \emph{In columns:} training samples, generated samples, histograms of marginal distributions $p(x[i])$ and power spectrum. \emph{In rows:} disorganized state $\beta = 0.50$, critical point $\beta = 0.68 \approx \beta_c$, and organized state $\beta = 0.76$.}
\label{plot:Phi4Synth}
\end{figure}

\subsection{Conditional Log-Concavity}

We numerically verify that $\varphi^4$ at critical temperature is CSLC (\Cref{def:marginal_logconcave}), with appropriate wavelet packet projectors. It amounts to verifying that the eigenvalues of the conditional Hessian $\nabla^2_{\bar{x_j}}\bar{E}_{\bar{\theta}_j}(x_j, \bar x_j)$ are positive for all $x_j$ and $\bar x_j$. We can restrict $x_j$ to typical samples from $p(x_j)$. However, it is important that the Hessian be positive even for $\bar x_j$ outside of the support of $p(\bar x_j|x_j)$. Indeed, negative eigenvalues occur at local directional maxima of the energy, rather than minima which would correspond to most likely samples. We thus evaluate the Hessian at $\bar{x}_j = 0$, which is expected to be such an adversarial point.

\Cref{plot:ConvexEigs} shows distributions of eigenvalues of $\nabla^2_{\bar x_j} \bar E_{\bar\theta_j}$ for decompositions $(\bar G_j, G_j)$ of various frequency bandwidths. It shows that the smallest eigenvalues become larger and eventually cross zero as the frequency bandwidth of $\bar G_j$ becomes narrower, as predicted by \Cref{th:conditionally-log-concave-scalar-potential}. Furthermore, the condition number of the Hessian becomes smaller as eigenvalues concentrate towards their mean. 

As shown in \cref{eq:scalar_potential_hessian}, both the quadratic part $K$ and the scalar potential $v$ contribute to the Hessian. As a way to visualize both contributions, we define the equivalent scalar potential $v^0$ as
    $v^0(t) = v(t) + \frac{\Tr(K)}{2d} t^2$.
It corresponds to extracting the mean quadratic value $\Tr(K)/2d\, \norm{x}^2$ from the quadratic part and reinterpreting it as a scalar potential. This allows visualizing the average energy on a pixel value when neglecting spatial correlations. The right panel of \Cref{plot:ConvexEigs} compares these equivalent scalar potentials for the energy $E_j$ of $x_j$ and the conditional energy $\bar E_j$. It shows that the non-convex double-well potential in the global energy becomes convex after the conditioning. It verifies \Cref{th:conditionally-log-concave-scalar-potential}, as the mean quadratic value becomes larger when we restrict $K$ to a subspace of high-frequency signals. 

\begin{figure}[t]
\centering
\includegraphics[width=0.45\textwidth]{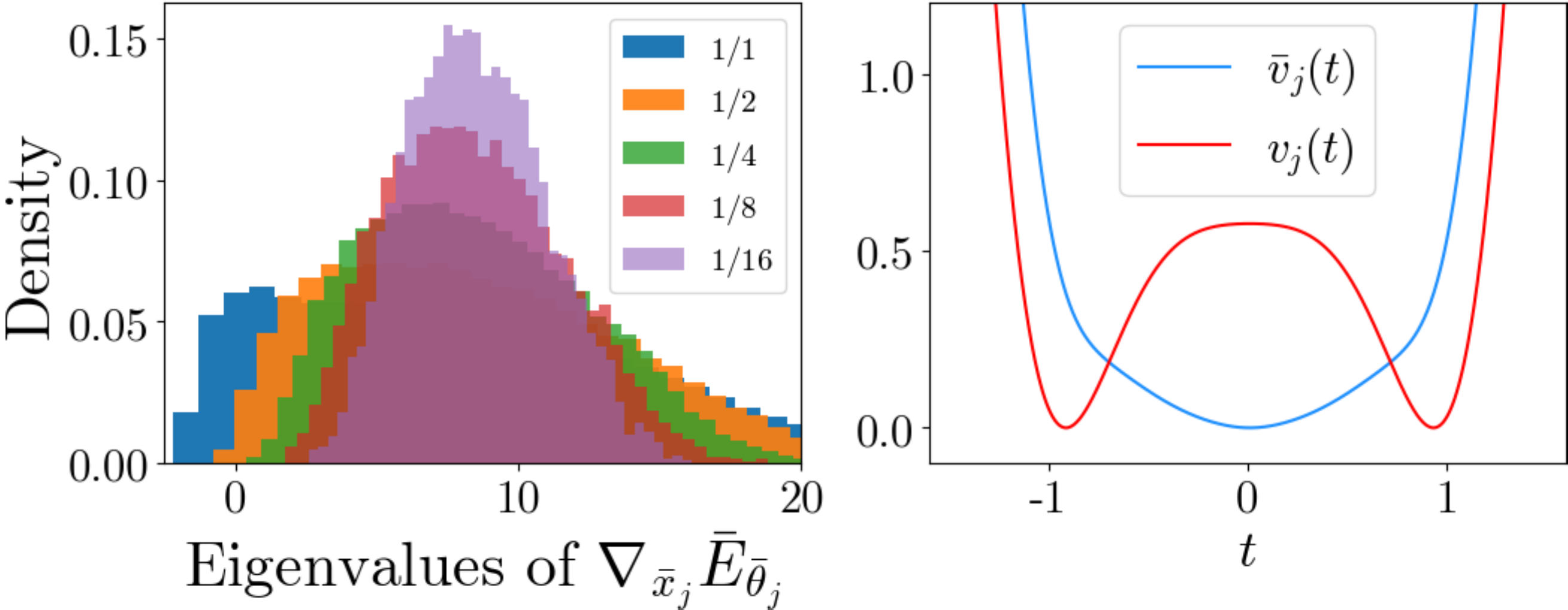}
\caption{ Conditional strong log-concavity of $\varphi^4$ at critical temperature. All scales $j$ yield similar results. \emph{Left:} distribution of eigenvalues of $\nabla^2_{\bar x_j} \bar{E}_{\bar{\theta}_j}$ for different frequency bandwidths ($j=1$ is shown). \emph{Right:} equivalent scalar potentials $v_j$ and $\bar{v}_j$ ($j=3$ is shown).}
\label{plot:ConvexEigs}
\end{figure}

We also verify the sampling efficiency predicted by \Cref{prop:malascaling}. As we cannot evaluate the $\mathrm{KL}$ divergences $\bar \epsilon_j^S$, we rather compute the decorrelation mixing time $\bar{\tau}$, a measure of the number of steps of conditional MALA to reach a given fixed error threshold averaged over all scales $j$. The precise definition is given in \Cref{sec:mala_mixing_time}. We compare it with the decorrelation mixing time $\tau$ of MALA on the non-convex global energy $E$.

Sampling maps of size $\sqrt d \times \sqrt d$ from the global $\varphi^4$ energy $E$ at the critical temperature requires a number of steps $\tau \sim d^{1.0}$ \citep{10.1093/oso/9780198834625.001.0001}. This phenomena is known as critical slowing down \citep{article,Sethna2021StatisticalME}, a consequence of long-range correlations. We numerically show that our algorithm does not suffer from it. \Cref{plot:Mix} indeed demonstrates an empirical scaling $\bar\tau \sim d^{0.35}$. Note that this is not directly comparable with \Cref{prop:malascaling} as the decorrelation mixing time defines a different convergence rate than the $\mathrm{KL}$ mixing time.

\begin{figure}[t]
\centering
\includegraphics[width=0.40\textwidth]{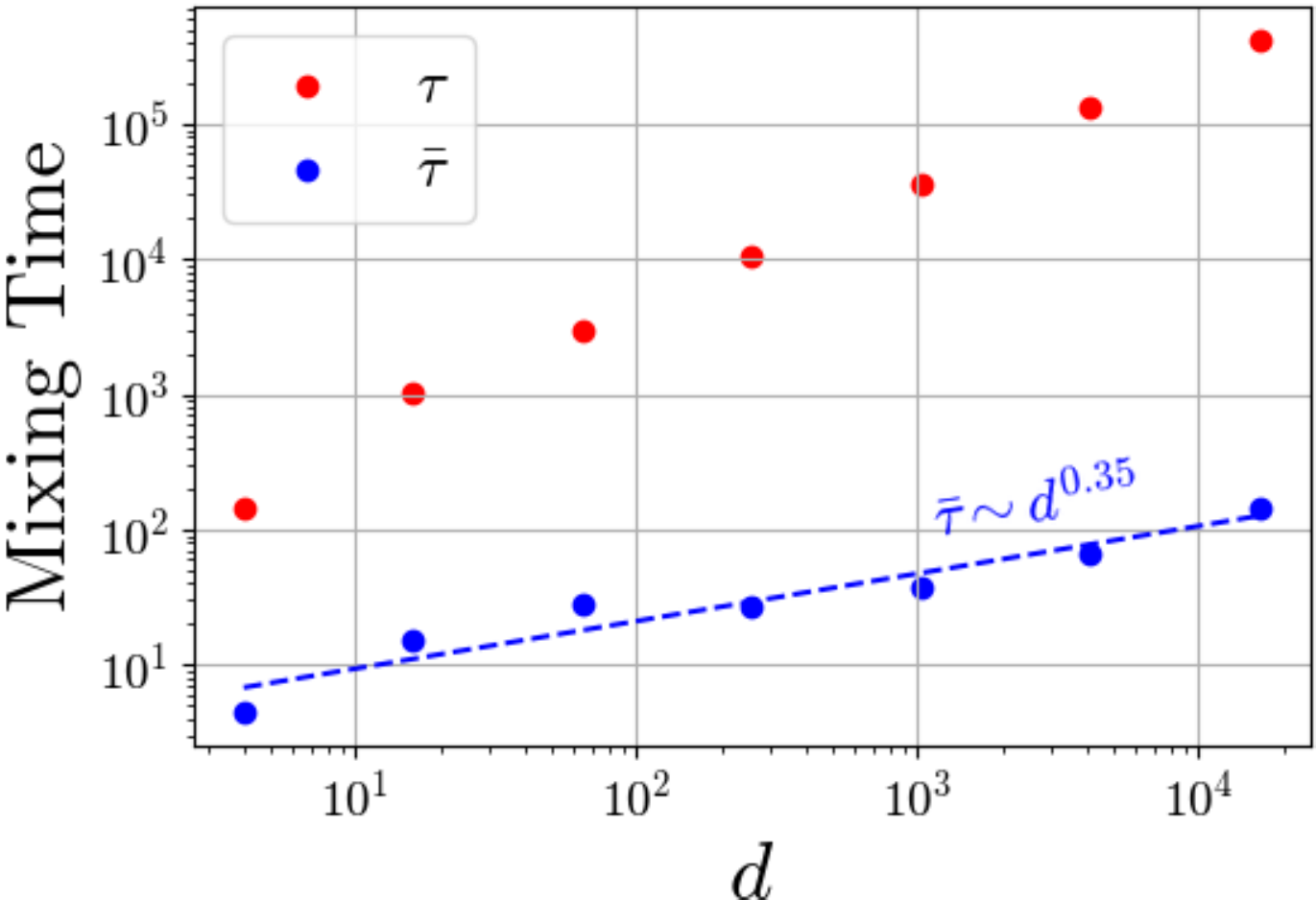}
\caption{Mixing times for direct ($\tau$) and conditional ($\bar\tau$) sampling for $\varphi^4$ at critical temperature.}
\label{plot:Mix}
\end{figure}

\subsection{Application to Cosmological Data}

We now apply our algorithm to generate high-resolution weak lensing convergence maps \citep{Bartelmann_2001,Kilbinger_2015} with an explicit probability model. Weak lensing convergence maps measure the bending of light near large gravitational masses on two-dimensional slices of the universe. We used simulated convergence maps computed by the Columbia lensing group \citep{PhysRevD.94.083506,PhysRevD.97.103515} as training data. They simulate the next generation outer-space telescope {\it Euclid} of the European Space Agency \citep{laureijs2011euclid}, which will be launched in 2023 to accurately determine the large scale geometry of the universe governed by dark matter. Estimating the probability distribution of such maps is therefore an outstanding problem \citep{marchand_wavelet_2022}. We demonstrate that the CSLC property is surprisingly verified in this real-world example, and can be used to efficiently model and generate these complex fields.

We use the same models and algorithms as for the $\varphi^4$ energy. The experimental setting is detailed in \Cref{sec:experimental_details}.
\Cref{plot:WeakSynth} shows that our generated samples are visually  highly similar to the training data. Quantitatively, they have nearly the same power spectrum. The marginal distribution of all $x[i]$ are also nearly the same, with a long tail corresponding to high amplitude peaks, which are typically difficult to reproduce. As opposed to microcanonical simulations with moment-matching algorithms \citep{10.1093/mnras/stab2102}, we compute an explicit probability distribution model, which is exponential. As a maximum-entropy model, it has a higher entropy than the true distribution, and therefore does not suffer from lack of diversity. By relying on the CSLC property, we can use the fast score-matching algorithm and compute $128\times 128$ images, at four times the $32\times 32$ resolution than with a maximum-likelihood algorithm used in \citet{marchand_wavelet_2022}.

\Cref{plot:wpot} shows the equivalent scalar potentials of the conditional energies at all scales, which are all convex and thus verify the CSLC property of weak lensing model. It demonstrates that this property can be used to efficiently model and generate high-resolution complex data.

\begin{figure}[t]
\centering
\includegraphics[width=0.45\textwidth]{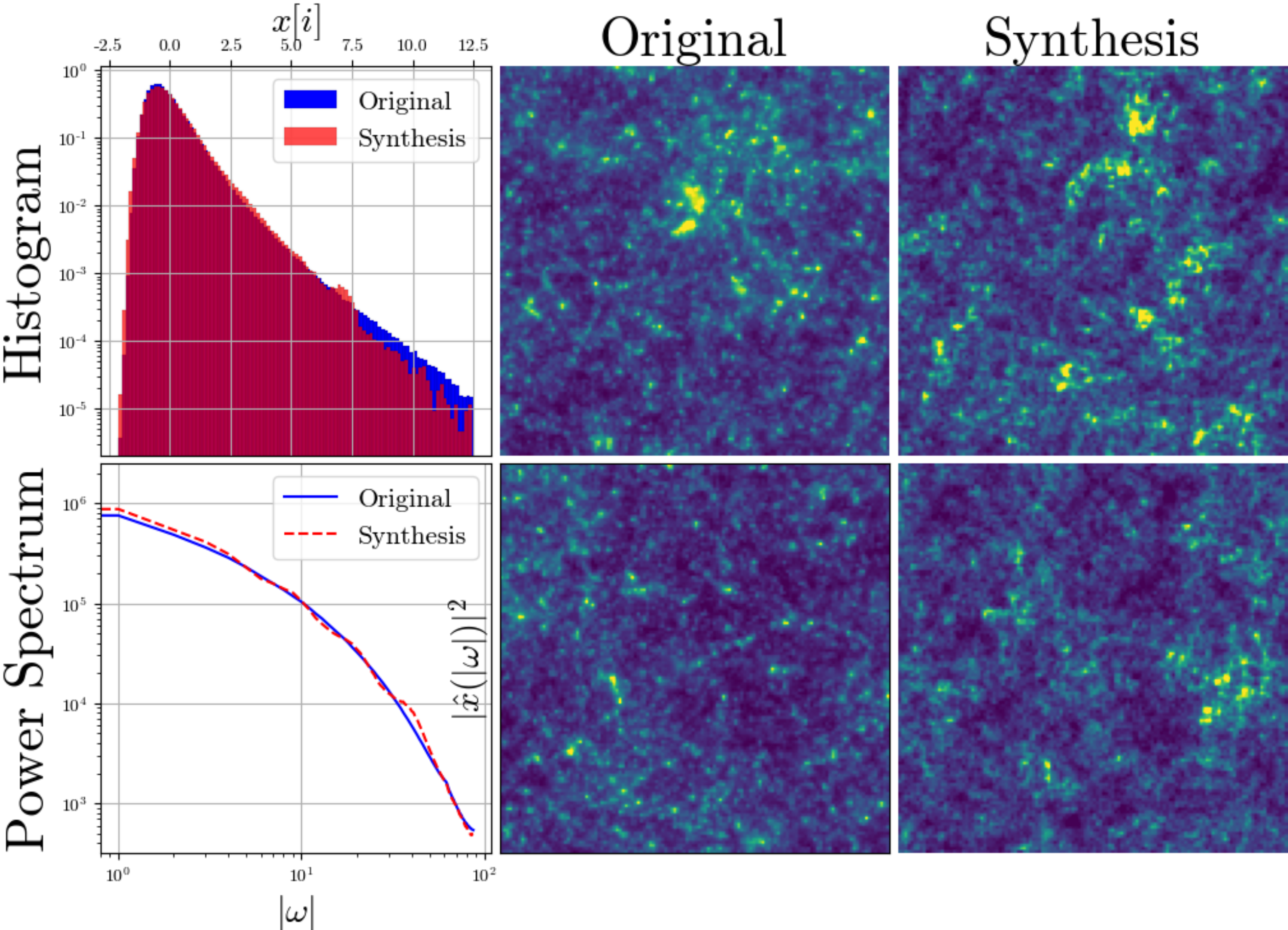}
\caption{Comparison between training and generated samples for weak-lensing maps. \emph{Upper left:} histograms of marginal distributions $p(x[i])$. \emph{Lower left:} power spectrum. \emph{Center:} training samples. \emph{Right:} generated samples.}
\label{plot:WeakSynth}
\end{figure}

\begin{figure}[t]
\centering
\includegraphics[width=0.40 \textwidth]{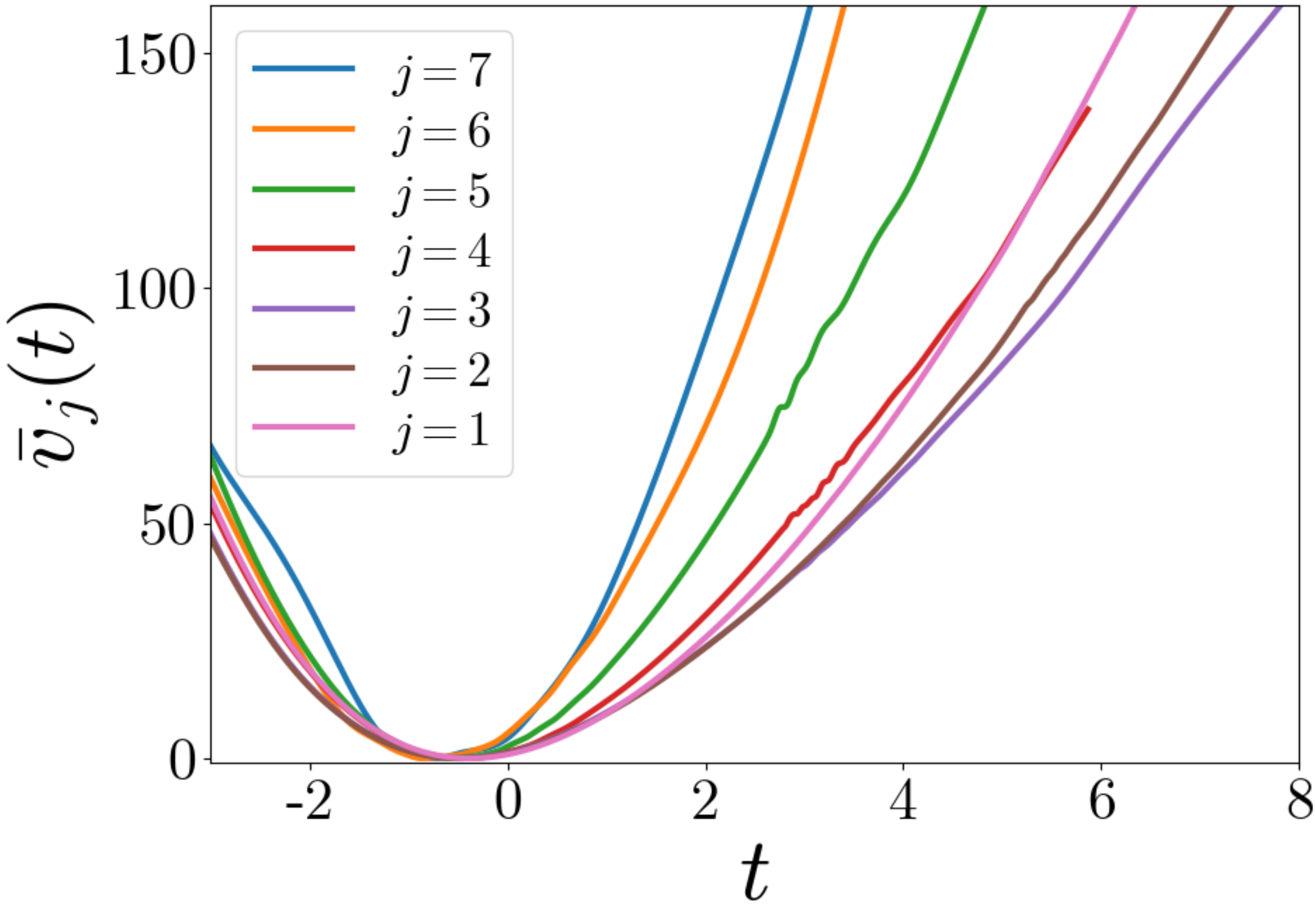}
\caption{Equivalent scalar potentials $\bar{v}_j$ at each scale $j$ for weak-lensing maps (normalized for viewing purposes).}
\label{plot:wpot}
\end{figure}

\section{Discussion}

We introduced conditionally strongly log-concave (CSLC) models and proved that they lead to efficient learning with score matching and sampling with MALA, while controlling errors. These models rely on iterated orthogonal projections of the data that are adapted to its distribution. We showed mathematically and numerically that complex multiscale physical fields satisfy the CSLC property with wavelet packet projectors. The argument is general and relies on the presence of a quadratic (kinetic) energy term which ensures strong log-concavity at high-frequencies. It provides high-quality and efficient generation of high-resolution fields even when the underlying distribution is unknown. The CSLC property guarantees diverse generations without memorization issues, which is critical in scientific applications.

CSLC models can be extended by introducing latent variables. The guarantees of \Cref{sec:section3} extend to the case where the data is a marginal of a CSLC distribution. A notable example is a score-based diffusion model, for which the data $x = x_0$ is a marginal of a higher-dimensional process $(x_t)_t$ whose conditionals $p(x_{t-\delta}|x_t)$ are approximately Gaussian white when $\delta$ is small, thus introducing a tradeoff between the number of terms in the CSLC decomposition and the condition number of its factors.  
Score diffusion is a generic transformation, but it assumes that the score $\nabla_{x_t} \log p(x_t)$ can be estimated with deep networks at any $t \geq 0$ \citep{song2020score,ho2020denoising}. For high-resolution images, the score estimation often uses conditional multiscale decompositions with or without wavelet transforms \citep{saharia2021image,ho2022cascaded,nichol2021beatgans,guth_wavelet_2022}. Understanding the log-concavity properties of natural image distributions under such transformations is a promising research avenue to understand the effectiveness of score-based diffusion models.


\section*{Acknowledgments}

This work was partially supported by a grant from the PRAIRIE 3IA Institute of the French ANR-19-P3IA-0001 program. We thank Misaki Ozawa for providing the $\varphi^4$ training dataset and his helpful advice on the numerical experiments. We thank the anonymous reviewers and area chair whose feedback have improved the paper significantly.

\bibliography{references}
\bibliographystyle{icml2023}

\appendix
\onecolumn


\appendixpage
\startcontents[sections]
\printcontents[sections]{l}{1}{\setcounter{tocdepth}{1}}



\section{Definition of Wavelet Packet Projectors}
\label{sec:wavelet_packets}

The fast wavelet transform \cite{Mallat1989ATF} splits a signal in frequency into two orthogonal coarser signals, using two orthogonal conjugate mirror filters $g$ and $\bar{g}$. 

We review the construction of such filters in \cref{subsec:filters}. A description of the fast wavelet transform is then given in \cref{subsec:fastwavelettransform}. Finally, we define in \cref{subsec:waveletpacket} the wavelet packet \cite{coifman1992wavelet} projectors $({G}_j,\bar G_j)$  used in the numerical section \ref{sec:section4}. 



\subsection{Conjugate Mirror Filters}
\label{subsec:filters}
Conjugate mirror filters  $g$ and $\bar{g}$ satisfy the following orthogonal and reconstruction conditions:
\begin{equation}
    \label{eqn:ortho}
        \begin{array}{ll}
            g\trans \bar{g} = \bar{g}\trans g = 0 ,\\
            g\trans g +\bar{g}\trans\bar{g} = \Id .
        \end{array}
\end{equation}
In one dimension, the conditions (\ref{eqn:ortho}) are satisfied \cite{Mallat1989ATF} by discrete filters $(g(n))_{n\in\mathbb{Z}},(\bar{g}(n))_{n\in\mathbb{Z}}$ whose Fourier transforms
$\hat{g}(\omega)=\sum_n g(n)e^{-in\omega}$ and
$\hat{\bar g}(\omega)=\sum_n g(n)e^{-in\omega}$
satisfy
\begin{equation}
    \label{eqn:filters}
        \begin{array}{ll}
            \vert \hat{g}(\omega)\vert^2 +\vert \hat{g}(\omega+\pi)\vert^2 = 2 ,\\
            \hat{g}(0) = \sqrt{2} ,\\
            \hat{\bar{g}}(\omega) = e^{-i\omega}\hat{g}(\omega+\pi) .
            
        \end{array}
\end{equation}
We first design a low-frequency filter $g$ such that
$\hat g(\omega)$ satisfies (\ref{eqn:filters}), and then compute $\bar{g}$ with
\begin{equation}
    \label{eqn:filters_hi}
        \begin{array}{ll}
            \bar{g}(n) = (-1)^{1-n}g(1-n) .
        \end{array}
\end{equation}

The choice of a particular low pass filter $g$ is a trade-off between a good localization in space and a good localization in the Fourier frequency domain. Choosing a perfect low-pass filter $g(\omega) = \mathds{1}_{\omega\in[-\pi/2,\pi/2]}$ leads to Shannon wavelets, which are well localized in the frequency domain but have a slow decay in space. On the opposite, a Haar wavelet filter  $g(n)=\sqrt{2} \, \mathds{1}_{n\in\{0,1\}}$ has a small support in space but is poorly localized in frequency. 
Daubechies filters \cite{doi:10.1137/1.9781611970104} provide a
good joint localization both in the spatial and Fourier domains. The Daubechies-$4$ wavelet is shown in \Cref{plot:Db4Fourier}.

\begin{figure}[t]
\begin{center}

\includegraphics[width=0.45\textwidth]{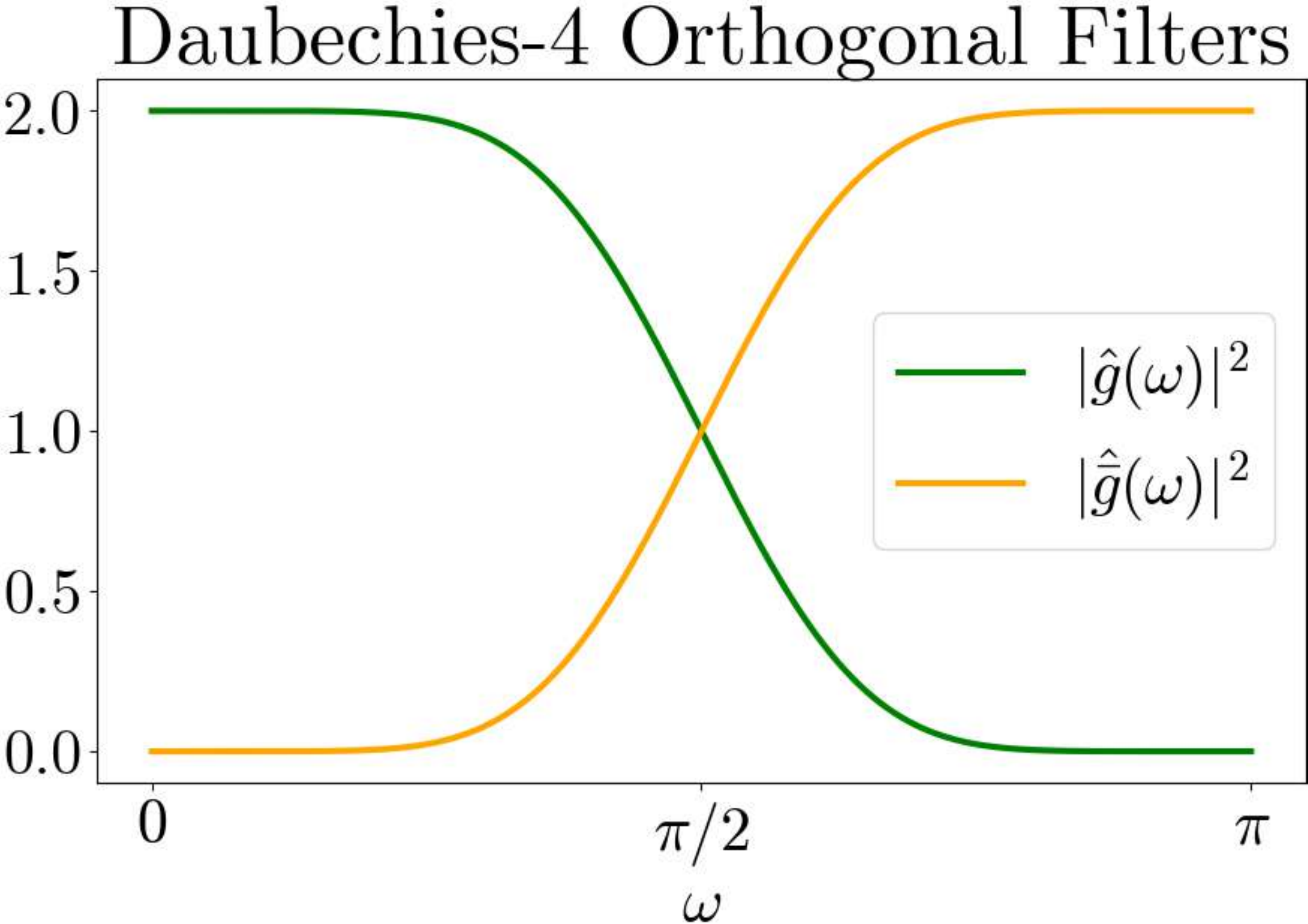}
\caption{Fourier transform of Daubechies-4 orthogonal filters $\hat g(\omega)$ (in green) and $\hat {\bar g}(\omega)$ (in orange).}
\label{plot:Db4Fourier}
\centering
\end{center}
\end{figure}


In two dimensions (for images), wavelet filters which satisfy the orthogonality conditions in (\ref{eqn:ortho}) can be defined as separable
products of the one-dimensional filters $g$ and $\bar{g}$ \cite{Mallat}, applied on each coordinate. It defines one low-pass filter $g_2$ and $3$ high-pass filters $\bar{g}_2 = (\bar{g}_2^k)_{1 \leq k\leq 3}$:
\begin{equation}
        \begin{array}{c}
            g_2(n_1, n_2) = g(n_1)g(n_2), \\
            \bar g_2^1(n_1,n_2) = g(n_1)\bar{g}(n_2) ,\\
            \bar g_2^2(n_1,n_2) = \bar{g}(n_1)g(n_2) ,\\
            \bar g_2^3(n_1,n_2) = \bar{g}(n_1)\bar{g}(n_2).
        \end{array}
\end{equation}
For simplicity we shall write $g$ and $\bar{g}$  the filters $g_2$ and $\bar{g}_2$. $\bar{g}$ outputs the concatenation of the 3 filters $\bar{g}_2^k$. 


\subsection{Orthogonal Frequency Decomposition}
\label{subsec:fastwavelettransform}
We introduce the
orthogonal decomposition of a signal $x_{j-1}$ with the low pass filter $g$ and the high pass filter $\bar{g}$, followed by a sub-sampling.
It outputs $(x_{j},\bar{x}_{j})$, which has the same dimension as $x_{j-1}$, defined in one dimension by
\begin{equation}
   \label{eqn:fastwav}
        \begin{array}{ll}
            x_{j} [p] =  \sum\limits_ {n\in\mathbb{R}^2} g[n-2p]x_{j-1}[n] ,\\
            \bar{x}_{j} [p] =  \sum\limits_{n\in\mathbb{R}^2} \bar{g}[n-2p]x_{j-1}[n] .
        \end{array}
\end{equation}
The inverse transformation is
\begin{equation}
   \label{eqn:fastwavinv}
        \begin{array}{l}
            x_j[p] = \sum\limits_{n\in\mathbb{R}^2} g[p-2n]x_{j+1}[n]+\sum\limits_{n\in\mathbb{R}^2} \bar{g}[p-2n]\bar{x}_{j+1}[n]   .
        \end{array}
\end{equation}

The orthogonal frequency decomposition in two dimensions is defined similarly. It decomposes a signal $x$ of size $\sqrt{d}\times\sqrt{d}$ into a low frequency signal and $3$ high frequency signals, each of size $ \frac{\sqrt{d}}{2} \times\frac{\sqrt{d}}{2}$. 



\subsection{Wavelet Packet Projectors}
\label{subsec:waveletpacket}
An orthogonal frequency decomposition projects a signal into high and low frequency domains. In order to refine the decomposition (by separating different frequency bands), wavelet packets projectors are obtained by cascading this orthogonal frequency decomposition.

The usual fast wavelet transform starts from a signal $\bar{x}_0$ of dimension $d$, decomposes it into a low-frequency $x_1$ and a high frequency $\bar{x}_1$, and then iterates this decomposition on the low-frequency $x_1$ only. It iteratively decomposes $x_{j-1}$ into the lower frequencies $x_j$ and the high-frequencies $\bar x_j$. The resulting orthogonal wavelet coefficients are $(\bar{x}_j,x_J)_{1\leq j\leq J}$. The resulting decomposition remains of dimension $d$. 

To obtain a finer frequency decomposition, we use the $M$-band wavelet transform \cite{Mallat}, a particular case of wavelet packets \cite{coifman1992wavelet}. It first applies the fast wavelet transform to the signal, and obtains $(\bar{x}_j,x_J)_{1\leq j\leq J}$. Each high-frequency output $\bar{x}_j$ undergoes an orthogonal decomposition using $g$ and $\bar g$. Then both outputs of the decomposition are again decomposed, and so on, $(M-1)$-times. The coefficients are then sorted according to their frequency support, and also labeled as $(\bar{x}_j,x_J)_{1\leq j\leq J'}$, with $J'=J2^{M-1}$, also referred to as $J$ in the main text.

The wavelet packet decomposition corresponds to first decomposing the frequency domain dyadically into octaves, and then each dyadic frequency band is further decomposed into $2^{M-1}$ frequency annuli. We say this decomposition corresponds to a $1/2^{M-1}$ octave bandwidth. Precisely, if $j=j'2^{M-1}+r$, then $\bar{x}_j$ has a frequency support over an annulus in the frequency domain, with frequencies with modulus of order  $2^{-j'}\pi(1- {2^{-M+1}}(r-1/2))$.
A two-dimensional visualization of the frequency domain can be found in \Cref{plot:WP2dPhi4}, for $M=1$ and $M=2$, corresponding to $1$ and $1/2$ octave bandwidths.

Figure \ref{plot:WP1d} shows the iterative use of $g$ and $\bar g$ used to obtain the decomposition, in one dimension, for $M = 2$. 
Note that the filters $\bar{g}$ and $g$  successively play the role of low-{} and high-pass filters because of the subsampling \cite{Mallat}.




We now introduce the corresponding orthogonal projectors
$G_j$ and $\bar{G}_j$, defined such that
\begin{equation}
   \label{eqn:fastpacket}
        \begin{array}{ll}
            \bar{x}_{j} = \bar{G}_j x_{j-1} ,\\

            x_{j} = G_j x_{j-1},
                 
        \end{array}
\end{equation}
where the $(\bar x_j )_j$, sorted in frequency, have been obtained trough the $M$-band wavelet transform, as described above, and $x_j$ refers to the signal reconstructed using $(x_J,\bar x_{j'})_{j'\geq j+1}$. 
Let us emphasize that the image $x_{j-1}$ is reconstructed from $x_j$ and the higher frequencies $\bar{x}_{j}$, and defined on a spatial grid which is either the same as $x_j$ or twice larger. For $M = 2$, \Cref{plot:x_j} shows that $x_0$ and $x_1$ are defined on the same grid, although $x_1$ has a lower-frequency support. Similarly $x_2$ and $x_3$ are both represented on the same grid, which is twice smaller, and so on.

The orthogonal projectors satisfy $G_j\trans G_j + \bar G_j\trans \bar G_j = \Id$.
We then have the following inverse formula:
\begin{equation}
   \label{eqn:fastpacket_inv}
        \begin{array}{l}
            x_{j-1}= G_j\trans x_j + \bar{G}_j\trans\bar{x_j}.
        \end{array}
\end{equation}
This decomposition using $G_j$ and $\bar{G}_j$ recursively splits the signal in frequencies, from high to low frequencies. 


\begin{figure}[t]

\begin{center}
\includegraphics[width=0.45\textwidth]{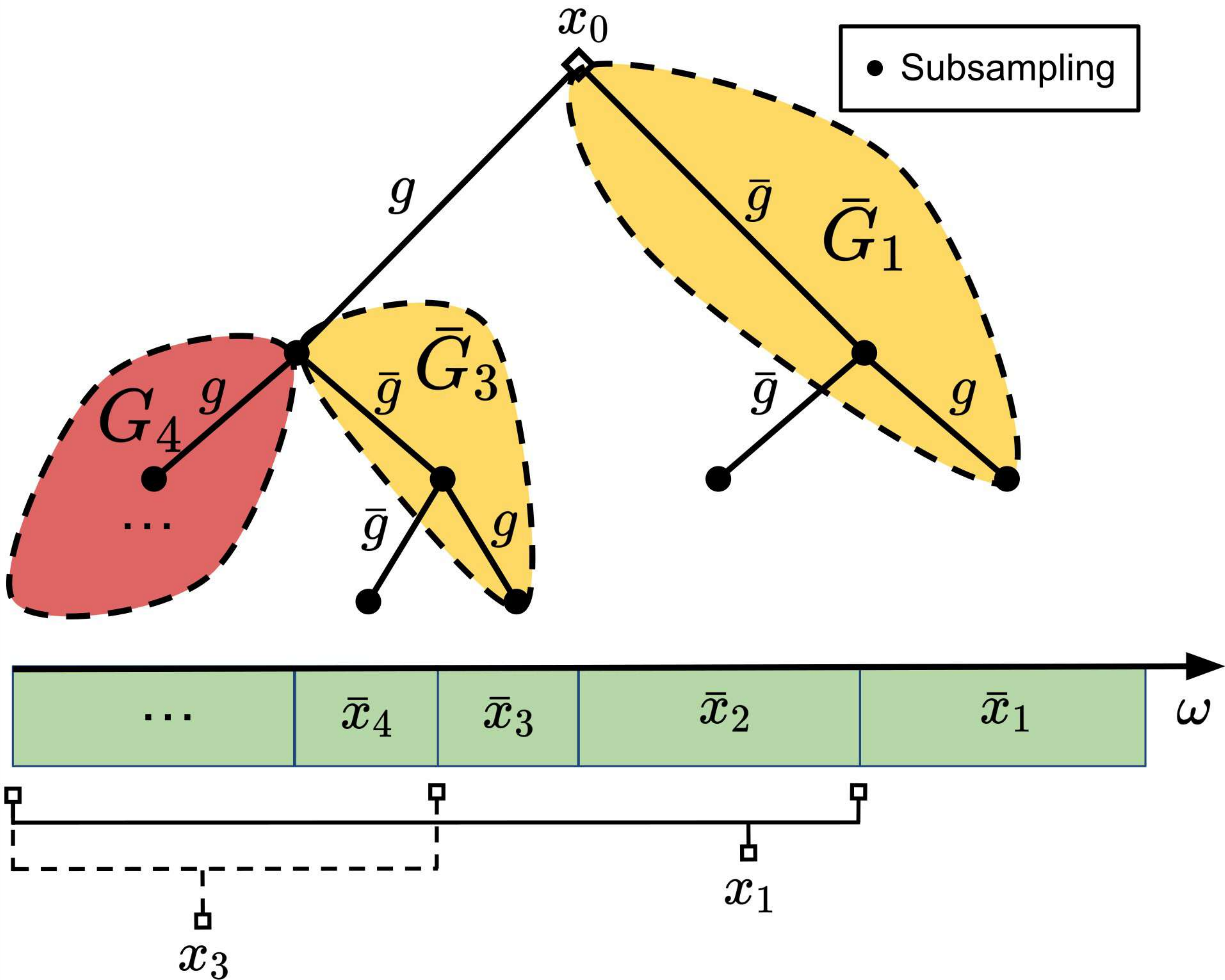}
\caption{In one dimension, a wavelet packet transform is obtain by cascading filterings and subsamplings with the filters $g$ and $\bar g$ along a binary splitting tree which outputs $x_J$ and $\bar x_j$ for $j \geq J$.}
\label{plot:WP1d}
\centering
\end{center}
\end{figure}

\begin{figure}[t]
\begin{center}
\includegraphics[width=0.45\textwidth]{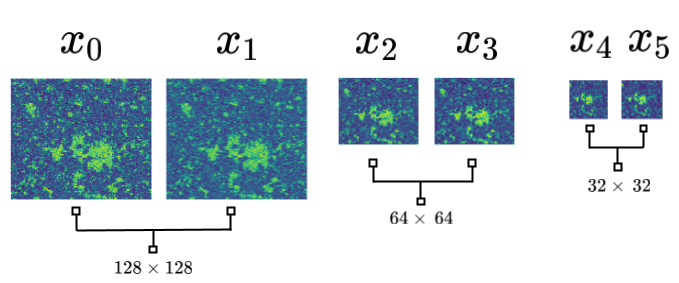}
\caption{Low-frequency maps $x_j$ for $M=2$ for a $\varphi^4$ realization.}
\label{plot:x_j}
\centering
\end{center}
\end{figure}

\section{Score Matching and MALA Algorithms for CSLC Exponential Families}
\subsection{Multiscale Energies}
\label{app:phibar}

This section introduces the explicit parametrization of the energies $\bar E_{\bar \theta_j}$ and $E_{\theta_J}$.

The conditional energies $\bar{E}_{\bar{\theta}_j}(x_j,\bar x_j)$
are defined with a bilinear term which represents the interaction between $x_j$ and $\bar x_j$ and a scalar potential:
 \begin{equation}
    \label{eqn:Ejdix}
         \bar{E}_{\bar{\theta}_j}(x_j,\bar{x}_j) = \frac{1}{2}\bar{x}_j\trans\bar{K}_j\bar{x}_j+\sum\limits_{l> j} \bar{x}_j\trans\bar{K}'_{l,j}\bar{x}_{j+l} + \sum_i \bar{v}_j(x_{j-1}[i]),
 \end{equation}
 with $x_{j-1} = \bar G_j\trans \bar x_j + G_j\trans x_j.$
\Cref{eqn:Ejdix} is an equivalent reparametrization of \cref{barphijeq}. Considering $(\bar x_l)_{l>j}$ instead of $x_j$ allows fixing some coefficients of the $\bar K'_{l,j}$ to zero instead of learning them. First, we set $\bar K'_{l,j} = 0$ if $\bar x_j$ and $\bar x_{j+l}$ are not defined on the same spatial grid. In the sequel, sums over $l$ only refer to theses terms, which differ depending on the wavelet decomposition. We enforce spatial stationarity by averaging the bilinear interaction terms across space. We further kept only the non-negligible terms which correspond to neighboring frequencies and neighboring spatial locations.
As displayed in \Cref{plot:sub}, $\bar{x}_j$ is composed of sub-bands $\bar{x}_j^k$. We kept the interaction terms $\bar{x}_j^k[i] \bar{x}_{j+l}^{k+\delta k}[i+\delta i]$ for $l \in \{0, 1\}$, $\delta k \in \{0,1\}$, and $\delta i \in \{0,1,2,3,4\}^2$, which correspond to local interactions in both space and frequency.

\begin{figure}[t]
\begin{center}   
\includegraphics[width=0.45\textwidth]{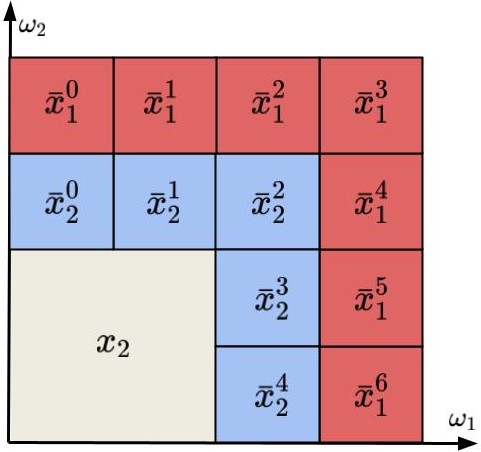}
\caption{Sub-bands of $\bar x_j$ for a wavelet packet decomposition with a half-octave bandwidth.}
\label{plot:sub}
\end{center}
\end{figure}
 
The scalar potential $\bar v_j(t)$ is decomposed on a family of predefined functions $\rho_{k,j}(t)$:
 \begin{equation}
    \label{eqn:vjdix}
         \bar{v}_j(t) = \sum\limits_k\bar{\alpha}_{k,j} \, {\rho}_{k,j}(t).
 \end{equation}
${\rho}_{j,k}$ is defined in order to expand the scalar potential $\bar v_j$ which captures the marginal distributions of the $x_{j-1}[i]$, which do not depend on $i$ due to stationarity. We divide this marginal into $N$ quantiles. Each ${\rho}_{k,j}$ is chosen to be a regular bump function having a finite support on the $k$-th quantile. This parametrization performs a pre-conditioning of the score matching Hessian.

Let $\rho$ be a bump function 
with a support in $[-{1}/{2},{1}/{2}]$. For each $j$, let
$a_{j,k}$ and $l_{j,k}$ be respectively the center and width of the $k$-th quantile of the marginal distribution of $\bar x_j$, we define
\begin{equation}
   \label{eqn:rho_k}
        {\rho}_{k,j}(t)=  \,l_k\sqrt{N} \, \rho\paren{\frac{t -a_{j,k}}{l_{j,k}}},
\end{equation}
with the condition
\begin{equation}
   \label{eqn:amplitude_factor}
        \norm{\rho'}_2^2 = \frac{1}{  \norm{\bar{G}_j}_2^2 },
\end{equation}
in order to balance the magnitude of the scalar potentials with the quadratic potentials. 

The potential vector is thus
\begin{equation}
\label{eqn:potentialdef}
\bar \Phi_j(x_j, \bar x_j) = \paren{ \sum\limits_{i} \bar x_j^k[i] \bar x_{j+l}^{k + \delta k}[i + \delta_i], \sum\limits_{i}\rho_{k',j}(x_{j-1}[i])}_{0 \leq l \leq 1, 0 \leq \delta k \leq 1, 0 \leq \delta_i \leq 4,1 \leq k' \leq N}.
\end{equation}

Similarly, we define $E_{\theta_J}$ as the sum of a quadratic energy and a scalar potential:
\begin{equation}
\label{eqn:EJdix}
     {E}_{{\theta}_J}(x_J) = \frac{1}{2}x_J\trans K_Jx_J+ \sum_i v_J (x_J [i]).
\end{equation}
The bilinear interaction terms are averaged across space to enforce stationarity.
The scalar potential $v_J(t)$ is also decomposed over a family of predefined functions $\rho_{k,J}(t)$:
\begin{equation}
\label{eqn:vJdix}
     v_J(t) = \sum\limits_k \alpha_{k,J} \, \rho_{k,J}(t),
\end{equation}
defined similarly as above.
This yields a potential vector
 \begin{equation}
    \label{eqn:PotJdix}
         \Phi_J(x_J) = \paren{\sum_i x_J[i] x_J[i + \delta_i],\rho_{k,J} (x_J)}_{0\leq\delta i \leq 4, 1 \leq k \leq N},
 \end{equation}
leading to
 \begin{equation}
    {E}_{{\theta}_J}(x_J) = \theta_J\trans \Phi_J (x_J),
 \end{equation}
with $\theta_J =(K_J,\alpha_{k,J})_k$.

\subsection{Pseudocode}
\label{sec:sm_mala_algorithms}

The procedure to  learn the parameters $(\bar \theta_j)_j$ of the conditional energies $\bar E_{\bar \theta_j}(x_j, \bar x_j)$ by score matching is detailed in \Cref{alg:SM}. The procedure to generate samples from the distribution $p_\theta(x)$ with MALA is detailed in \Cref{alg:sampling}.

\begin{algorithm}[H]
\caption{Score matching for exponential families with CSLC distributions}\label{alg:SM}
\begin{algorithmic}
    \REQUIRE Training samples $(x^{i})_{1 \leq i \leq n}$.
    \STATE Initialize $x^{i}_0 = x^{i}$ for $1 \leq i \leq n$.
    \FOR{$j = 1$ \TO $J$}
        \STATE Decompose $x_j^{i} \leftarrow G_j x_{j-1}^{i}$ and $\bar x_j^{i} \leftarrow \bar G_j x_{j-1}^{i}$ for $1 \leq i \leq n$.
        \STATE Compute the score matching quadratic term ${H}_j \leftarrow \frac1n\sum_{i=1}^n {\nabla_{\bar x_j} \bar\Phi_j(x^{i}_j,\bar{x}^{i}_j) \nabla_{\bar x_j} \bar\Phi_j(x^{i}_j,\bar{x}^{i}_j) \trans} \in \R^{m\times m}$.
        \STATE Compute the score matching linear term ${g}_j \leftarrow \frac1n\sum_{i=1}^n {\Delta_{\bar x_j} \bar \Phi_j(x^{i}_j,\bar{x}^{i}_j)} \in \R^m$.
        \STATE Set $\bar \theta_j \leftarrow H_j^{-1} g_j$.
    \ENDFOR
    \RETURN Model parameters $(\bar\theta_j)_j$.
\end{algorithmic}
\end{algorithm}

\newcommand\inner[1]{\left\langle #1 \right\rangle}
\begin{algorithm}[H]
\caption{MALA sampling from CSLC distributions}\label{alg:sampling}
\begin{algorithmic}
    \REQUIRE Model parameters $(\bar\theta_j)_j$, an initial sample $x_J$ from $p(x_J)$, step sizes $(\delta_j)_j$, number of steps $(T_j)_j$.
    \FOR{$j = J$ \TO $1$}
        \STATE Initialize $\bar x_{j,0} = 0$.
        \FOR{$t = 1$ \TO $T_j$}
            \STATE Sample $\bar y_{j,t} \sim \mathcal{N}\paren{\bar x_{j,t-1} - \delta_j \nabla_{\bar x_j}\bar E_{\bar\theta_j}(x_j, \bar x_{j,t-1}), 2\delta_j\Id}$.
            \STATE Set $a = \norm{\nabla_{\bar x_j}\bar E_{\bar \theta_j}(x_j, \bar y_{j,t}))}^2 + \norm{\nabla_{\bar x_j}\bar E_{\bar \theta_j}(x_j, \bar x_{j,t-1}))}^2$.
            \STATE Set $b = \inner{ \bar y_{j,t} - \bar x_{j,t-1}, \nabla_{\bar x_j} \bar E_{\bar\theta_j}(x_j, \bar y_{j,t}) - \nabla_{\bar x_j}  \bar E_{\bar\theta_j}(x_j, \bar x_{j,t-1})}$.
            \STATE Set $c= \bar E_{\bar\theta_j}(x_j, \bar y_{j,t}) - \bar E_{\bar\theta_j}(x_j, \bar x_{j,t-1})$.
            \STATE Compute acceptance probability $p = \exp\paren{-\frac{\delta_j}{4}a +\frac12 b - c}$.
            \STATE Set $\bar x_{j,t} = \bar y_{j,t}$ with probability $p$ and $\bar x_{j,t} = \bar x_{j,t-1}$ with probability $1 - p$.
        \ENDFOR
        \STATE Reconstruct $x_{j-1} = G_j\trans x_j + \bar G_j\trans \bar x_{j,T_j}$.
    \ENDFOR
    \RETURN a sample $x_0$ from $\hat p_\theta(x)$.
\end{algorithmic}
\end{algorithm}

\section{Experimental Details}
\label{sec:experimental_details}

\subsection{Datasets}

\paragraph{Simulations of $\mathbf{\varphi^4}$.}
We used samples from the $\varphi^4$ model generated using a classical MCMC algorithm, for 3 different temperatures, at the critical temperature $\beta_c \approx 0.68$, above the critical temperature at $\beta = 0.50 < \beta_c$, and  below the critical temperature at $\beta = 0.76 > \beta_c$. For $\beta = 0.76 $, we break the symmetry and only generate samples with positive mean. For each temperature, we generate $10^4$ images of size $128\times 128$.

\paragraph{Weak lensing.}
We used down-sampled versions of the simulated convergence maps from the Columbia Lensing Group \citep[\url{http://columbialensing.org/};][]{PhysRevD.94.083506,PhysRevD.97.103515}. Each map, originally of size $1024\times 1024$, is downsampled twice with local averaging. We then extract random patches of size $128\times 128$.

To pre-process the data, we subtract the minimum of the pixel values over the entire dataset, and then take the square root. This process is reversed after generating samples. We also do not consider the outliers (less than $1\%$ of the dataset) with pixels above a certain cutoff,  in order to reduce the extent of the tail and attenuate weak lensing peaks. Our dataset is made of $\simeq 4\times10^3$ images. 

\subsection{Experimental Setup}
\label{sec:experimental_setup}

\paragraph{Wavelet filter.}
We used the Daubechies-4 wavelet \citep{doi:10.1137/1.9781611970104}, see the filter in \Cref{plot:Db4Fourier}. 

\paragraph{Wavelet packets.}
We implemented wavelet packets in PyTorch, inspired from the PyWavelets software \citep{Lee2019}. The source code is available at \url{https://github.com/Elempereur/WCRG}.

\paragraph{Score matching.}
We pre-condition the score matching Hessian $H_j$ by normalizing its diagonal before computing $H_j^{-1} g_j$ in \Cref{alg:SM}. After this normalization, we obtain condition numbers $\kappa_{\bar\theta_j}$ which satisfy $\kappa_{\bar \theta_j} \leq 2\times 10^3$ at all $j$.


\paragraph{Sampling.}
The MALA step sizes $\delta_j$ are adjusted to obtain an optimal acceptance rate of $\approx 0.57$. Depending on the scale $j$, the stationary distribution is reached in $T_j \approx 20$--$400$ iterations from a white noise initialization. We used a qualitative stopping criterion according to the quality of the matching of the histograms and power spectrum.

\subsection{Mixing Times in MALA}
\label{sec:mala_mixing_time}

Sampling from $p_{\theta}$ requires sampling from $p_{\theta_J}$, and then conditionally sampling from $p_{\bar{\theta}_j}(\bar{x}_j | x_j)$. This last step is performed with a Markov chain whose stationary distribution is $p_{\bar{\theta}_j}(\bar{x}_j|x_j)$ for a given $x_j$. It generates successive samples $\bar{x}_j(t)$ where $t$ is the the step number in the Markov chain.

We introduce the conditional auto-correlation function:
$$A_{j}(t) = \frac{\mathbb{E} \left[ \paren{\bar{x}_j(t) - \mathbb{E}[\bar{x}_j \,|\, x_j]} \paren{\bar{x}_j(0) - \mathbb{E}[\bar{x}_j \,|\, x_j]} \right]}{\mathbb{E}[\delta \bar{x}_j^2]}.$$
The expected value $\mathbb{E}$  is taken with respect to both $x_j$ and the sampled $\bar{x}_j$.
$A_j(t)$ has an exponential decay.
Let $\bar{\tau}_j$ be the mixing time defined
as the time it takes for the Markov chain to generate two independent samples:
$$A_{j}(t) \approx A_j(0) \exp\paren{-\frac{t}{\bar{\tau}_j}}.$$
$\bar{\tau}_j$ is computed by regressing $\log(A_{j}(t))$ over $t$.

Each iteration of MALA with $p_{\bar{\theta}_j}(\bar{x}_j\mid x_j)$  computes a gradient of size $\bar{d}_j$. In order to estimate the real computational cost of the sampling of $p_{\theta}$, we average $\bar \tau_j$ proportionally to the dimension $\bar d_j$:
$$\bar{\tau} = \sum\limits_{j=1}^{J} \frac{{\bar{d}_j}}{d}\bar{\tau}_j +{\tau}_J\frac{d_J}{d},$$
where $d$ is the dimension of $x$.








\section{Energy Estimation with Free-Energy Modeling}
\label{sec:energy_estimation}

This section explains how to recover an explicit parametrization of the negative log-likelihood $-\log p_\theta$ from the parameterized energies $\bar E_{\bar \theta_j}$. We introduce a parameterization of the normalization constant of the Gibbs energies 
for each $j$ and describe an efficient score-matching algorithm to learn the parameters. This leads to a decomposition of the negative log-likelihood $-\log p_\theta$ over scales. 

\subsection{Free-Energy Score Matching}




From the decomposition
\begin{equation*}
    p_\theta(x)  = p_{\theta_J} (x_J) \prod_{j=1}^J p_{\bar \theta_j} (\bar{x}_j |x_j),
\end{equation*}
we obtain
\begin{equation}
    \label{eq:global_energy_decomposition}
    -\log p_\theta(x) = E_{\theta_J}(x_J) + \sum_{j=1}^J \paren{\bar E_{\bar\theta_j}(x_j, \bar x_j) + \log \bar Z_{\bar\theta_j}(x_j)} + \mathrm{cst},
\end{equation}
where $\bar Z_{\bar\theta_j}(x_j)$ is the normalization constant for $\bar E_{\bar\theta_j}(x_j, \bar x_j)$. To retrieve the global negative log-likelihood $-\log p_\theta(x)$, we thus compute an approximation of $-\log \bar Z_{\bar\theta_j} (x_j)$ with a parametric family $F_{\tilde\theta_j}$.


The parameters $\tilde \theta_j$ of the approximation of the normalizing factors $\bar Z_{\tilde \theta_j}$ can be learned in a manner similar to denoising score matching. Indeed, using the identity
\begin{equation*}
    -\nabla_{x_j} \log \bar Z_{\bar\theta_j}(x_j) = \expect{\nabla_{x_j} \bar E_{\bar\theta_j}(x_j, \bar x_j) \,|\, x_j},
\end{equation*}
which can be proven by a direct computation of the gradient, the parameters $\tilde \theta_j$ can be estimated by minimizing 
\begin{equation}
    \label{eq:free_energy_regression2}
    \tilde \ell_j(\tilde \theta_j) = \expect{\norm{\nabla_{x_j} F_{\tilde\theta_j} - \nabla_{x_j} \bar E_{\bar{\theta}_j}}^2}.
\end{equation}

For an exponential model $F_{\tilde\theta_j} = \tilde\theta_j\trans\tilde{\Phi}_j$ with a fixed potential vector $\tilde \Phi_j$, \cref{eq:free_energy_regression2} is quadratic in $\tilde \theta$ and admits a closed-form solution:
\begin{equation*}
    \tilde\theta_j = \expect{\nabla_{x_j} \tilde\Phi_j\nabla_{x_j} \tilde\Phi_j\trans}^{-1} \expect{\nabla_{x_j} \tilde\Phi_j\nabla_{x_j} \bar E_{\bar{\theta}_j}}.
\end{equation*}
We finally obtain the energy decomposition
\begin{equation}
    \label{eq:global_energy_decomposition2}
    -\log p_\theta(x) = E_{\theta_J}(x_J) + \sum_{j=1}^J \paren{\bar E_{\bar\theta_j}(x_j, \bar x_j) - F_{\tilde\theta_j}(x_j)} + \mathrm{cst}.
\end{equation}
This score-based method is much faster and simpler to implement than likelihood-based methods such as the thermodynamic integration of \citet{marchand_wavelet_2022}, which requires generation of many samples while varying the parameters $\bar \theta_j$ of the conditional energy $\bar E_{\bar \theta_j}$.

\subsection{Parameterized Free-Energy Models}

The potential vector $\tilde \Phi_j$ is modeled in the class of \cref{scal-en}, following \citet{marchand_wavelet_2022} and similarly to \Cref{app:phibar}:
\begin{align*}
    F_{\tilde{\theta}_j}(x_j) &= \frac{1}{2}x_j\trans\tilde{K}_jx_j + \tilde{V}_j(x_j) + \sum\limits_i\tilde{v}_j(x_j[i]) \\
    \tilde{v}_j(t) &= \sum\limits_k \tilde{\alpha}_{j,k} \tilde{\rho}_{j,k}(t),
\end{align*}
which gives $\tilde{\theta}_j =(\tilde{K}_j,\tilde{\alpha}_{j,k})_k$ and an associated potential vector
\begin{equation*}
    \tilde{\Phi}_j(x_j) = \paren{\frac{1}{2}x_jx_j\trans,\tilde{\rho}_{j,k}(x_j)}_{k}.
\end{equation*}

\subsection{Multiscale Energy Decomposition}

We now expand the models for the conditional energies $\bar E_{\bar\theta_j}$ and the so-called free energies $F_{\tilde\theta_j}$ in \cref{eq:global_energy_decomposition2}. All the quadratic terms $(K_J,\bar K_j, \tilde K_j)_j$ can be regrouped in an equivalent quadratic term $K$. We then have
\begin{align*}
-\log p_\theta(x) &= \frac12 x\trans K x +\sum\limits_i \left[ v_{J}(x_J[i])+ \sum_{j=1}^{J}  \paren{\bar{v}_{j}(x_{j-1}[i]) - \tilde{v}_{j}(x_{j}[i])} \right]\\
    &= \frac12 x\trans K x + \sum\limits_i \left[ \bar v_1(x_0[i]) + \sum_{j=1}^{J} \paren{\bar v_{j+1}(x_j[i]) - \tilde v_j(x_j[i])}\right] ,
\end{align*}
with $\bar v_{J+1} = v_J$. This defines multiscale scalar potentials $V_j$: 
$$ \begin{array}{l}
            V_j = \bar{v}_{j+1} -\tilde{v}_{j},\\
            V_0 = \bar v_1,
    \end{array} $$
such that we have the global negative log-likelihood or energy function:
\begin{equation*}
    -\log p_\theta(x) = \frac12 x\trans K x +  \sum_{j=0}^{J} \sum_i V_j(x_j[i]).
\end{equation*}
For $\varphi^4$ at critical temperature, as derived in \cite{marchand_wavelet_2022}, the only non-zero scalar potential will be $V_0$. The other $V_j$ potentials are zero, up to a quadratic term.

As a numerical test, \Cref{plot:FreePhi4} verifies that on $\varphi^4$ at critical temperature, $\bar v_{j+1}$ and $\tilde v_j$ indeed cancel out so that $V_j = 0$ for $j > 0$. In order to ensure that the quadratic difference mentioned above vanishes, we subtract to $\tilde v_j$ the quadratic interpolation of $\tilde v_j-\bar v_{j+1}$.


\begin{figure}[t]
\centering
\includegraphics[width=0.45\textwidth]{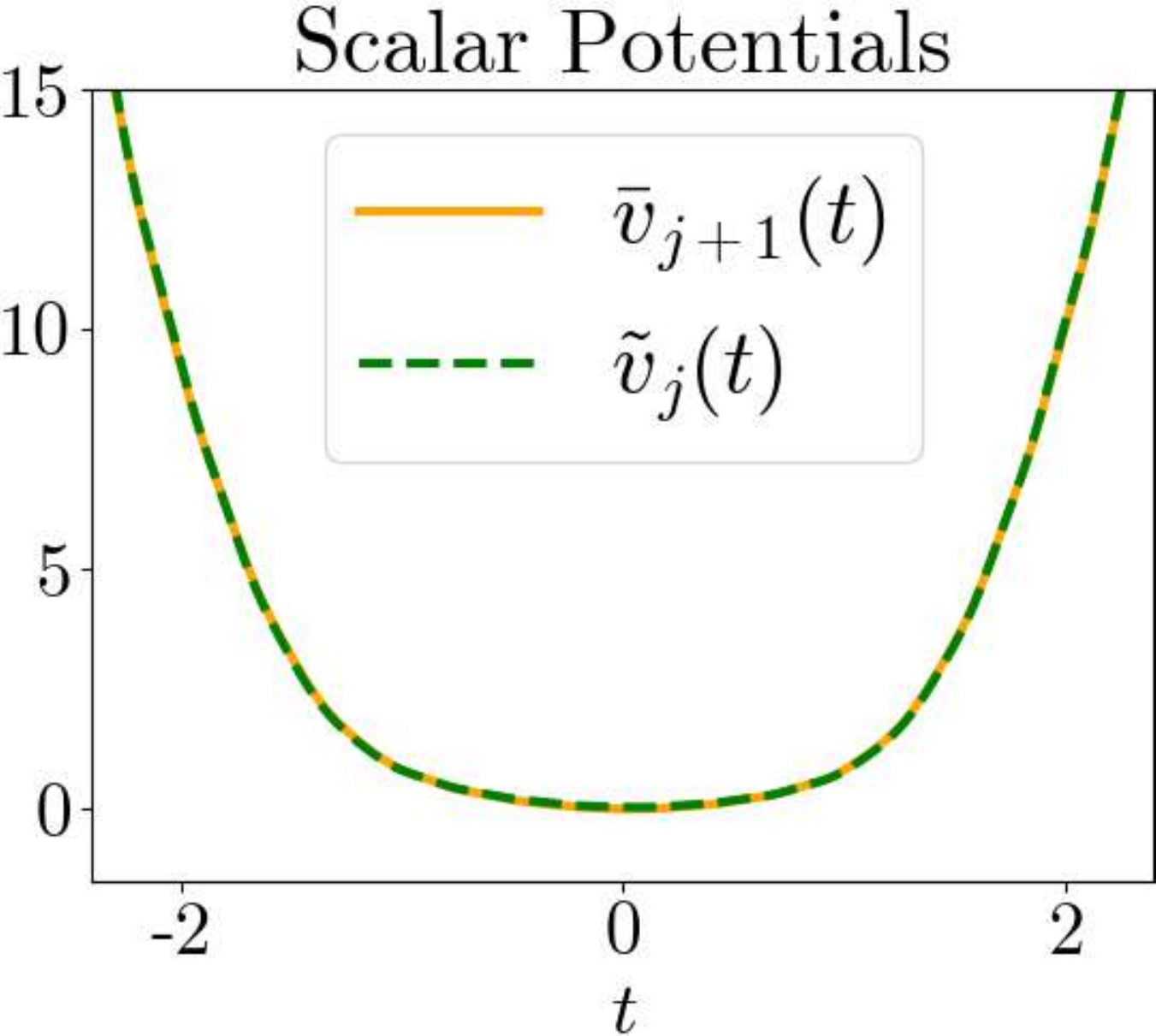}
\caption{For $\varphi^4$ at $\beta_c$, the conditional potentials $\bar v_{j+1}$ and free-energy potential $\tilde v_j$ cancel out. Only $j=1$ is shown, other scales show similar behavior. }
\label{plot:FreePhi4}
\end{figure}









\section{Proofs of Section \ref{sec:section3}}
\label{sec:proofs_sec3}

\subsection{Proof of Proposition \ref{prop:global_error}}
\errordecomp*
\begin{proof}
We use the following decomposition of KL divergence in terms of conditional distributions: 
\begin{lemma}
\label{lem:klcond}
Let $p(x) = p(x_J) \prod_{j=1}^J p(\bar{x}_j| x_j)$ and $q(x) = q(x_J) \prod_{j} q(\bar{x}_j|x_j)$. 
We have 
$\mathrm{KL}(p \,\|\, q) = \sum_j \mathbb{E}_{x_j\sim p} \mathrm{KL}( p( \cdot | x_j) \,\|\, q( \cdot | x_j) )~.$
\end{lemma}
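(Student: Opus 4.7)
The plan is to apply the chain rule for KL divergence iteratively, peeling off one scale at a time and exploiting the fact that $p$ and $q$ share the same orthogonal factorization structure.

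First I would establish (or invoke) the elementary two-variable chain rule: for any joint densities $p(a,b) = p(a)p(b|a)$ and $q(a,b) = q(a)q(b|a)$,
\begin{equation*}
\mathrm{KL}(p(a,b)\,\|\,q(a,b)) = \mathrm{KL}(p(a)\,\|\,q(a)) + \mathbb{E}_{a\sim p}\bigl[\mathrm{KL}(p(\cdot|a)\,\|\,q(\cdot|a))\bigr],
\end{equation*}
which follows by splitting $\log\tfrac{p(a,b)}{q(a,b)} = \log\tfrac{p(a)}{q(a)} + \log\tfrac{p(b|a)}{q(b|a)}$ and taking expectation under $p(a,b)$.

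Next I would apply this to each successive scale. Because the unitary decomposition $x_{j-1} = G_j\trans x_j + \bar{G}_j\trans \bar{x}_j$ is a bijection, the change of variables is volume-preserving and we have $p(x_{j-1}) = p(x_j)\,p(\bar{x}_j|x_j)$ and likewise for $q$. The two-variable chain rule then gives
\begin{equation*}
\mathrm{KL}(p(x_{j-1})\,\|\,q(x_{j-1})) = \mathrm{KL}(p(x_j)\,\|\,q(x_j)) + \mathbb{E}_{x_j\sim p}\bigl[\mathrm{KL}(p(\bar{x}_j|x_j)\,\|\,q(\bar{x}_j|x_j))\bigr].
\end{equation*}

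Iterating this identity for $j = 1, 2, \dots, J$ yields a telescoping sum, leaving
\begin{equation*}
\mathrm{KL}(p\,\|\,q) = \mathrm{KL}(p(x_J)\,\|\,q(x_J)) + \sum_{j=1}^J \mathbb{E}_{x_j\sim p}\bigl[\mathrm{KL}(p(\bar{x}_j|x_j)\,\|\,q(\bar{x}_j|x_j))\bigr],
\end{equation*}
which matches the claimed identity under the convention that the index $j$ in the lemma's sum includes a $j=J$ term interpreted as $\mathrm{KL}(p(x_J)\,\|\,q(x_J))$ (i.e.\ a trivial conditioning on the empty set).

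There is no real obstacle here; the proof is essentially a bookkeeping exercise built on the elementary chain rule. The only subtlety worth flagging is to confirm that the decomposition into $(x_J, \bar{x}_J, \dots, \bar{x}_1)$ is indeed a measure-preserving bijection of $x$, so that the joint KL on $\mathbb{R}^d$ equals the joint KL on the decomposed variables — this is immediate from the unitarity of each $(G_j, \bar{G}_j)$ stated in \Cref{eqn:invwav}.
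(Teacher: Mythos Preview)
Your proposal is correct and essentially identical to the paper's proof: the paper proceeds by induction on $J$, splitting $\log p(x) = \log p(\bar{x}_1|x_1) + \log p(x_1)$ to obtain $\mathrm{KL}(p\,\|\,q) = \mathrm{KL}(p(x_1)\,\|\,q(x_1)) + \mathbb{E}_{x_1\sim p}\mathrm{KL}(p(\cdot|x_1)\,\|\,q(\cdot|x_1))$ and then applying the induction hypothesis to the first term. Your iterative/telescoping phrasing is just the unrolled version of the same induction, and your remarks on volume preservation and on the convention for the $j=J$ term are harmless extra clarifications.
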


Using Lemma \ref{lem:klcond} we obtain that 
$\mathrm{KL}( p \,\|\, p_\theta) = \epsilon_J^L + \sum_j \bar{\epsilon}^L_j$ 
and $\mathrm{KL}( \hat{p} \,\|\, p_\theta) = \epsilon_J^S + \sum_j \bar{\epsilon}^LSj$. 
We conclude with the Pinsker inequality
$$\mathrm{TV}( \hat{p}, p) \leq \mathrm{TV}( \hat{p}, p_\theta) + \mathrm{TV}( {p}_\theta, p) \leq \frac{1}{\sqrt{2}} \left(\sqrt{\mathrm{KL}( p \,\|\, p_\theta)} + \sqrt{\mathrm{KL}( \hat{p} \,\|\, p_\theta)}\right)~.$$
\end{proof}

\begin{proof}[Proof of Lemma \ref{lem:klcond}]
    We proceed by induction over $J$. Observe that 
    $\log p(x) = \log p( \bar{x}_1| x_1) + \log p(x_1)$, so 
    \begin{align*}
   \mathrm{KL}(p \,\|\, q) &= \mathbb{E}_p [\log(p) - \log(q)] \\
   &= \mathbb{E}_p[\log p(x_1) - \log q(x_1)] + \\
   & \mathbb{E}_{x_1 \sim p(x_1)} \mathbb{E}_{\bar{x}_1 \sim p(\bar{x}_1 | x_1)} [\log p( \bar{x}_1| x_1) - \log q( \bar{x}_1| x_1)] \\
   &= \mathrm{KL}( p(x_1) \,\|\, q(x_1) ) \\
   &+ \mathbb{E}_{x_1 \sim p(x_1)} \mathrm{KL}( p( \cdot | x_1) \,\|\, q( \cdot | x_1) )~.
    \end{align*}
The first term $\mathrm{KL}( p(x_1) \,\|\, q(x_1) )$ now involves $J-1$ factors, and hence we can apply the induction step to conclude.
 \end{proof}

\subsection{Proof of Theorem \ref{prop:learn2}}

We will use a general concentration result of the empirical covariance for general distributions with mild moment assumptions \cite{vershynin2018high}, as well 
as anticoncentration properties of the random design \cite{mourtada2022exact}. 
Together, they provide enough control on the probability tails so that the inverse covariance concentrates to the precision matrix in expectation. 
\begin{assumption}
\label{ass:normbound}
Let $X \in \mathbb{R}^{m \times d}$ be a random matrix. Assume that 
 there exists $K\geq 1$ such that $\|X\|_F \leq K \mathbb{E}[\|X\|_F^2]^{1/2}$ almost surely.   
\end{assumption}

\begin{theorem}[General Covariance Estimation with High Probability, {\citep[Theorem 5.6.1, Ex 5.6.4]{vershynin2018high}}]
\label{thm:covconc} 
Let $X \in \mathbb{R}^{m \times d}$ be a random matrix satisfying assumption \ref{ass:normbound}. Let $\Sigma = \mathbb{E}[X X^\top]$, and for any $n$ let $\hat{\Sigma}_n =\frac1n \sum_i X_i X_i^\top$ be the sample covariance matrix, where $X_i$ are $n$ iid copies of $X$. 
There exists an absolute constant $C$ such that for any $\delta>0$, it holds
\begin{equation}
\label{eq:covproba}
    \| \hat{\Sigma}_n - \Sigma \| \leq C \left(\sqrt{\frac{K^2 m (\log( m) + \log(2/\delta))}{n}} + \frac{K^2 m (\log( m) + \log(2/\delta))}{n}  \right) \| \Sigma\|
\end{equation}
with probability at least $1-\delta$. 
\end{theorem}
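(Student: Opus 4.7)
The theorem is a standard sub-exponential concentration result for sample covariance matrices under an almost-sure Frobenius-norm bound, and the natural approach is to invoke the Matrix Bernstein inequality applied to the iid mean-zero $m \times m$ matrices $Y_i = X_i X_i^\top - \Sigma$. The empirical covariance error is $\hat\Sigma_n - \Sigma = \frac1n \sum_i Y_i$, and Matrix Bernstein requires two inputs: an almost-sure bound on $\|Y_i\|_{\mathrm{op}}$ and a bound on the matrix-variance proxy $\|\sum_i \mathbb{E}[Y_i^2]\|_{\mathrm{op}}$.

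First I would bound each summand. Since $X_i X_i^\top$ is PSD, $\|X_i X_i^\top\|_{\mathrm{op}} \leq \mathrm{Tr}(X_i X_i^\top) = \|X_i\|_F^2$, so Assumption \ref{ass:normbound} gives $\|X_i\|_F^2 \leq K^2 \mathbb{E}[\|X\|_F^2] = K^2 \mathrm{Tr}(\Sigma) \leq K^2 m \|\Sigma\|$ almost surely, where the last step uses $\mathrm{Tr}(\Sigma) \leq m \|\Sigma\|$ for PSD $m \times m$ matrices. Combined with $\|\Sigma\|_{\mathrm{op}}$ on the centering, this yields $\|Y_i\|_{\mathrm{op}} \leq 2 K^2 m \|\Sigma\|$ almost surely (using $K \geq 1$). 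For the variance, PSD ordering gives $\mathbb{E}[Y_i^2] \preceq \mathbb{E}[(X_i X_i^\top)^2] = \mathbb{E}[\|X_i\|_F^2 \, X_i X_i^\top] \preceq K^2 \mathrm{Tr}(\Sigma) \, \Sigma \preceq K^2 m \|\Sigma\| \, \Sigma$, hence $\|\sum_i \mathbb{E}[Y_i^2]\|_{\mathrm{op}} \leq n K^2 m \|\Sigma\|^2$.

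Plugging these into Matrix Bernstein then yields, for some absolute constant $c > 0$ and every $t > 0$,
\[
\Pr\!\left(\|\hat\Sigma_n - \Sigma\|_{\mathrm{op}} \geq t\right) \;\leq\; 2m \exp\!\left( -\frac{c\, n t^2}{K^2 m \|\Sigma\|^2 + K^2 m \|\Sigma\|\, t} \right).
\]
Setting the right-hand side equal to $\delta$ and inverting produces two regimes: a sub-Gaussian one in which $t \asymp \sqrt{K^2 m \log(2m/\delta)/n}\,\|\Sigma\|$ and a sub-exponential one in which $t \asymp K^2 m \log(2m/\delta)/n\,\|\Sigma\|$. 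Using $\log(2m/\delta) \leq \log m + \log(2/\delta)$ and taking the sum of the two expressions as a uniform upper bound across regimes recovers exactly \cref{eq:covproba}.

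The main obstacle will be bookkeeping rather than conceptual: one must correctly combine the $\log(2m)$ dimension factor inherent to Matrix Bernstein with the $\log(1/\delta)$ coming from inverting the failure probability, and carry the assumption-driven bound $\|X\|_F^2 \leq K^2 m \|\Sigma\|$ consistently through both the boundedness and variance estimates. A secondary subtlety worth noting is that the explicit factor $m$ in the final bound arises from the crude inequality $\mathrm{Tr}(\Sigma) \leq m \|\Sigma\|$; replacing it by the effective rank $r(\Sigma) = \mathrm{Tr}(\Sigma)/\|\Sigma\|$ would give a strictly sharper statement of the same form, which is the formulation typically emphasized in Vershynin's monograph.
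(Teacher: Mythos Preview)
The paper does not supply its own proof of this theorem; it quotes the result directly from \citet[Theorem 5.6.1, Exercise 5.6.4]{vershynin2018high} and uses it as a black box. Your proposal is precisely the standard argument underlying that reference: apply the matrix Bernstein inequality to the centered summands $Y_i = X_i X_i^\top - \Sigma$, with the almost-sure boundedness and the variance proxy both controlled via $\|X_i\|_F^2 \le K^2 \mathrm{Tr}(\Sigma) \le K^2 m \|\Sigma\|$. So there is nothing to compare---your route is the intended one.

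One minor slip worth fixing: in the variance step you write $\mathbb{E}[(X_i X_i^\top)^2] = \mathbb{E}[\|X_i\|_F^2 \, X_i X_i^\top]$ as an equality. That identity holds when $X_i$ is a column vector, but here $X_i \in \mathbb{R}^{m \times d}$ is a matrix, and one only has the PSD inequality $(X_i X_i^\top)^2 = X_i (X_i^\top X_i) X_i^\top \preceq \|X_i\|_{\mathrm{op}}^2\, X_i X_i^\top \preceq \|X_i\|_F^2\, X_i X_i^\top$. The conclusion is unchanged, but the equality should be a $\preceq$.
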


\begin{assumption}[Moment Condition]
\label{ass:normbound_our}
Assume that there exists $K_X$ and $K_Y$ such that $X:= (\nabla_{\bar{x}} \bar{\Phi}_k(\bar{x},x))_{k \leq m} \in \mathbb{R}^{m \times d}$, and $Y = (\Delta_{\bar{x}} \bar{\Phi}_k(\bar{x},x))_{k \leq m} \in \mathbb{R}^m$ satisfy Assumption \ref{ass:normbound} with constants $K_X$ and $K_Y$ respectively, where  $(\bar{x},x)\sim p(\bar{x},x)$. 
\end{assumption}

\begin{assumption}[Anticoncentration Condition, {\citep[Assumption 1]{mourtada2022exact}}]
\label{ass:smallball}
    The random matrix $X=(\nabla_{\bar{x}} \bar{\Phi}_k(\bar{x},x))_{k \leq m} \in \mathbb{R}^{m \times d}$ 
    satisfies the following: there exists constants $C\geq 1$ and $\nu \in (0,1]$ such that for every $\theta \in \mathbb{R}^m \setminus \{0\}$ and $t>0$, $\mathbb{P}( \theta^\top {X} {X}^\top \theta \leq t^2 \theta^\top \mathbb{E}[XX^\top] \theta ) \leq (C t)^\nu$. 
\end{assumption}

\begin{theorem}[{\citep[Corollary 3]{mourtada2022exact}}]
\label{thm:invertible_cov}
Let $X\in \mathbb{R}^{m \times d}$ be a random matrix satisfying Assumption \ref{ass:smallball} and such that $\mathbb{E}[\| X\|_F^2]<\infty$, with $\Sigma = \mathbb{E}[X X^\top]$. Then, if $m/n \leq \nu /6$, for every $t\in(0,1)$, the empirical covariance matrix $\hat{\Sigma}_n$ obtained from an iid sample of size $n$ satisfies 
$$\hat{\Sigma}_n \succeq t \Sigma$$
with probability with probability greater than $1 - (\tilde{C} t)^{\nu n/6}$, where $\tilde{C}$ only depends on $C$ and $\nu$ in Assumption \ref{ass:smallball}. 
\end{theorem}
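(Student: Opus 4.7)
The plan is to reduce the statement to the isotropic case by a whitening transformation, then combine a pointwise lower-tail estimate derived directly from the small-ball property with a covering argument to obtain a uniform lower bound over the unit sphere, carefully balancing entropy against the concentration exponent to produce the claimed $\nu n/6$.

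First I would normalize by replacing $X$ with $\Sigma^{-1/2} X$ (understood as the pseudoinverse on $\mathrm{Range}(\Sigma)$ if $\Sigma$ is only PSD). This linear change of variables preserves Assumption \ref{ass:smallball} with the same constants $C, \nu$, and the conclusion $\hat{\Sigma}_n \succeq t \Sigma$ becomes $\theta\trans \hat{\Sigma}_n \theta \geq t$ for every $\theta \in S^{m-1}$, where the normalized covariance is $I_m$. Next, for each fixed $\theta \in S^{m-1}$ and threshold $u = 2t$, I would introduce the Bernoulli indicator $Z_i(\theta) = \mathbb{1}\{\|X_i\trans \theta\|^2 \geq u\}$. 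Assumption \ref{ass:smallball} applied with parameter $\sqrt{u}$ gives $\mathbb{P}(Z_i(\theta) = 0) \leq (C\sqrt{u})^\nu$. Since $\theta\trans \hat{\Sigma}_n \theta \geq u \cdot \frac{1}{n}\sum_i Z_i(\theta)$, the event $\{\theta\trans\hat{\Sigma}_n\theta < t\}$ forces $Z_i(\theta) = 0$ for at least $n/2$ indices. Choosing the failing subset in at most $\binom{n}{n/2} \leq 4^n$ ways and using the independence of the $X_i$ yields
\begin{equation*}
\mathbb{P}\bigl(\theta\trans \hat{\Sigma}_n \theta < t\bigr) \leq 4^n \bigl(C\sqrt{2t}\bigr)^{\nu n/2} = \bigl(\tilde c_0 \sqrt{t}\bigr)^{\nu n/2},
\end{equation*}
for a constant $\tilde c_0$ depending only on $C$ and $\nu$.

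The last step would upgrade the pointwise bound to a uniform one. Taking an $\eta$-net $\mathcal{N}$ of $S^{m-1}$ of cardinality at most $(3/\eta)^m$ and applying the union bound controls $\min_{\theta\in\mathcal{N}} \theta\trans \hat\Sigma_n \theta$. The extension from the net to the full sphere uses the PSD inequality $\lambda_{\min}(A) \geq \min_{\theta \in \mathcal{N}} \theta\trans A \theta - 2\eta \|A\|$, and under the hypothesis $m/n \leq \nu/6$ the entropy factor $(3/\eta)^m$ can be absorbed into the base of the exponent, producing a bound of the form $(\tilde C t)^{\nu n/6}$ for some $\tilde C = \tilde C(C,\nu)$. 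The principal obstacle is precisely this net-to-sphere extension: under Assumption \ref{ass:smallball} alone there is no a priori bound on $\|\hat\Sigma_n\|$, so the PSD deviation inequality cannot be invoked naively. Following the approach of Oliveira and Mourtada, one bypasses this either via a PAC-Bayesian variant (integrating a Gaussian smoothing of $\theta$ in the exponential inequality, which avoids an explicit net at the cost of a sharper analysis of the resulting moment generating function) or by truncating each $X_i$ at a level where the truncation error is negligible and the truncated operator norm is controlled. Tracking this argument carefully and optimizing the choices of $u$ and $\eta$ is what produces the specific exponent $\nu n / 6$ and the condition $m/n \leq \nu/6$.
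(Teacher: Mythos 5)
This statement is imported verbatim from \citet[Corollary 3]{mourtada2022exact}; the paper offers no proof of it and uses it as a black box inside the proof of \Cref{prop:learn2} (via \Cref{lem:tails_to_exp,lem:Hinv_bounded}), so there is no internal argument to compare yours against. Judged on its own terms, the first two steps of your reconstruction are correct and complete. Whitening by $\Sigma^{-1/2}$ preserves \Cref{ass:smallball} with the same $(C,\nu)$, since the assumption is already stated relative to $\theta\trans\mathbb{E}[XX^\top]\theta$, and reduces the claim to a uniform lower bound on $\theta\trans\hat{\Sigma}_n\theta$ over the unit sphere. For fixed $\theta$, the observation that $\theta\trans\hat{\Sigma}_n\theta<t$ forces $\|X_i^\top\theta\|^2<2t$ for at least $n/2$ indices, combined with independence and a union bound over the at most $2^n$ subsets, does give a pointwise tail of the form $\tilde c_0^{\nu n/2}\,t^{\nu n/4}$. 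Note this exponent is $\nu n/4$, strictly better than the $\nu n/6$ in the statement; the surplus $t^{\nu n/12}$ is exactly the budget available to pay for an entropy factor $e^{O(m)}$ under the hypothesis $m/n\le\nu/6$, so your bookkeeping is consistent with the claimed constants.

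The gap is the uniformization step, which you correctly identify as the obstacle but do not carry out. You are right that the naive $\eta$-net argument fails: under \Cref{ass:smallball} together with only $\mathbb{E}\|X\|_F^2<\infty$, there is no useful high-probability bound on $\|\hat{\Sigma}_n\|$, so the inequality $\lambda_{\min}(\hat{\Sigma}_n)\ge\min_{\theta\in\mathcal{N}}\theta\trans\hat{\Sigma}_n\theta-2\eta\|\hat{\Sigma}_n\|$ cannot close the argument. But gesturing at ``a PAC-Bayesian variant or truncation'' locates the difficulty without resolving it: the passage from the pointwise small-ball bound to the uniform one \emph{is} the content of Mourtada's Corollary 3, and it is precisely where the specific exponent $\nu n/6$ and the threshold $m/n\le\nu/6$ are generated (neither of which your sketch actually derives). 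As it stands, your proposal is a faithful skeleton of the known proof with its hardest joint left unassembled. If you intend to give a self-contained proof you must execute one of the two routes you name; otherwise the intellectually honest move is the one the paper itself makes, namely to cite the result and verify only that \Cref{ass:smallball} matches the hypothesis of \citet[Assumption 1]{mourtada2022exact}.
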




\begin{restatable}[Excess risk for CSLC exponential models, Theorem \ref{prop:learn2} restated]{theorem}{SMexpo2}
\label{prop:learn3}
    Let ${\bar\theta}^\star=\arg\min \ell({\bar\theta})$ and $\hat{\bar\theta} = \arg\min \hat\ell(\bar\theta)$. Assume:
    \begin{enumerate}[label={(\roman*)}]
    \item $\bar\theta^\star \in \Theta_{\bar\alpha}$ for some ${\bar\alpha}>0$, 
    \item ${H}=\expect{\nabla_{\bar x} \bar\Phi \nabla_{\bar x} \bar\Phi\trans} \succeq \eta \Id$ with $\eta>0$,
    \item the sufficient statistics $\bar{\Phi}$ satisfy moment conditions \ref{ass:normbound_our}, regularity conditions \ref{ass:smallball}, and $\nabla \bar{\Phi}_k(x,\bar x)$ is $M_{\bar{\Phi}}$-Lipschitz for any $k\leq m$ and all $x$.
    \end{enumerate}
    Then when $n > m$, the empirical risk minimizer $\hat{\bar\theta}$ satisfies:
    \begin{equation}
    \label{eq:scprop}
        \hat{\bar\theta} \in \Theta_{\hat{{\bar\alpha}}}\ \text{with } \expect[(\bar x^{i}, x^{i})]{\hat{{\bar\alpha}}} \geq {\bar\alpha} - O\left(\eta^{-1} \sqrt{\frac{m}{n}}\right),
    \end{equation}
     \begin{equation}\label{eq:FIgen}
        \expect[(\bar x^{i}, x^{i})]{\ell(\hat{\bar{\theta}})} \leq \left[\ell({\bar\theta}^\star) + O\left(\kappa(H) \eta^{-1} \frac{m}{n} \right)\right], 
     \end{equation}
    and, for 
    $t \ll \sqrt{m} \ell(\bar{\theta}^\star)$,
    \begin{equation}\label{eq:KLgen}
       \bar{\epsilon}^L \leq \frac{\ell({\bar\theta}^\star)}{{\bar\alpha}}(1+t)  
    \end{equation}
    with probability greater than $1 - \exp\left\{ -O(n \log (tn/\sqrt{m}))\right\} $ over the draw of the training data. 
   The constants in $O(\cdot)$ only depend on moment and regularity properties of $\bar{\Phi}$.
\end{restatable}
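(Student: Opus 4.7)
The plan is to exploit the linearity of the score-matching problem to reduce the analysis to the random design covariance estimation set-up of \citet{mourtada2022exact,vershynin2018high}. Since $\bar E_{\bar\theta}(x,\bar x) = \bar\theta\trans \bar\Phi(x,\bar x)$ is linear in $\bar\theta$, the empirical risk minimizer has the explicit form $\hat{\bar\theta} = \hat H^{-1} \hat g$, and the population counterpart satisfies $\bar\theta^\star = H^{-1} g$ with $g = \expect{\Delta_{\bar x}\bar\Phi}$. Subtracting the two normal equations yields the exact identity
\begin{equation*}
\hat{\bar\theta} - \bar\theta^\star = \hat H^{-1}\bigl[(\hat g - g) - (\hat H - H)\bar\theta^\star\bigr],
\end{equation*}
so the whole proof reduces to controlling the three random quantities $\hat H - H$, $\hat g - g$, and $\|\hat H^{-1}\|$, and then translating a bound on $\|\hat{\bar\theta} - \bar\theta^\star\|$ into the three claimed conclusions.

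The first ingredient is a two-sided spectral control of $\hat H$. The moment bound in Assumption \ref{ass:normbound_our} feeds directly into Theorem \ref{thm:covconc}, giving $\|\hat H - H\| \lesssim \|H\|\sqrt{m \log(m/\delta)/n}$ on an event of probability $1-\delta$, while Assumption \ref{ass:smallball} combined with Theorem \ref{thm:invertible_cov} gives $\hat H \succeq \tfrac12 H \succeq \tfrac12 \eta \Id$ on an event of probability $1 - (C')^{\nu n}$. A bounded-moment vector concentration argument applied analogously to $\hat g = \frac1n\sum_i Y^{(i)}$ gives $\|\hat g - g\| \lesssim \sqrt{m/n}$ with comparable probability. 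Plugging these three bounds into the identity above yields $\|\hat{\bar\theta} - \bar\theta^\star\| \lesssim \eta^{-1}\sqrt{m/n}\,(1+\|\bar\theta^\star\|)$ on the good event. To translate this into the Hessian bound \cref{eq:scprop}, I use that $\nabla_{\bar x}\bar\Phi_k$ is $M_{\bar\Phi}$-Lipschitz, so $\|\nabla^2_{\bar x}\bar\Phi_k\|_{\mathrm{op}} \leq M_{\bar\Phi}$ for every $k$, which gives
\begin{equation*}
\nabla^2_{\bar x}\bar E_{\hat{\bar\theta}} \succeq \nabla^2_{\bar x}\bar E_{\bar\theta^\star} - M_{\bar\Phi}\sum_k |\hat{\bar\theta}_k - \bar\theta_k^\star|\,\Id \succeq \bigl(\bar\alpha - O(M_{\bar\Phi}\sqrt{m}\,\|\hat{\bar\theta}-\bar\theta^\star\|)\bigr)\Id.
\end{equation*}
Taking expectation and absorbing the exponentially small complement event where $\hat H$ is near-singular into the $O(\cdot)$ constant produces the claim on $\expect{\hat{\bar\alpha}}$.

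For the Fisher excess risk \cref{eq:FIgen}, the quadratic structure of $\ell$ gives $\ell(\hat{\bar\theta}) - \ell(\bar\theta^\star) = \tfrac12 (\hat{\bar\theta} - \bar\theta^\star)\trans H (\hat{\bar\theta} - \bar\theta^\star)$, and on the good event $\hat H \succeq H/2$ the identity above yields
\begin{equation*}
(\hat{\bar\theta} - \bar\theta^\star)\trans H (\hat{\bar\theta} - \bar\theta^\star) \leq 2\bigl\|\hat g - g - (\hat H - H)\bar\theta^\star\bigr\|_{H^{-1}}^2,
\end{equation*}
whose expectation is $O(\eta^{-1}\kappa(H)\,m/n)$ after taking traces of the two empirical covariances and using $\|H^{-1}\| \leq \eta^{-1}$. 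The KL bound \cref{eq:KLgen} then follows by applying the Bakry-Emery inequality \cref{eq:bakry} with the random constant $\hat{\bar\alpha}$ (which is strictly positive on the good event), giving $\bar\epsilon^L \leq \ell(\hat{\bar\theta})/\hat{\bar\alpha}$, and finally using the bounds from the previous paragraph: both $\hat{\bar\alpha} \geq \bar\alpha(1 - o(1))$ and $\ell(\hat{\bar\theta}) \leq \ell(\bar\theta^\star)(1 + o(1))$ hold with probability at least $1-\exp(-\Omega(n \log(tn/\sqrt{m})))$ once $t$ is in the stated regime. The main technical obstacle I anticipate is handling the rare event where $\hat H$ fails to be well-conditioned: the expectation bound on $\hat{\bar\alpha}$ requires that the contribution from this bad tail not overwhelm $\bar\alpha$, which is exactly what the anticoncentration bound of Theorem \ref{thm:invertible_cov} is designed to give, and calibrating the small-ball constant $t$ in that theorem against the Lipschitz-induced loss in $\hat{\bar\alpha}$ is where most of the care in the proof will go.
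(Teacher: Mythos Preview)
Your proposal is essentially the same approach as the paper's: the decomposition $\hat{\bar\theta}-\bar\theta^\star = \hat H^{-1}[(\hat g-g)-(\hat H-H)\bar\theta^\star]$ is algebraically identical to the paper's $\hat{\bar\theta}-\bar\theta^\star = (\hat H^{-1}-H^{-1})g + \hat H^{-1}(\hat g-g)$, and you invoke the same concentration/anticoncentration tools (Theorems~\ref{thm:covconc} and~\ref{thm:invertible_cov}), the same Lipschitz perturbation of $\nabla^2_{\bar x}\bar E$, and the same Bakry--Emery step. The one place where the paper is more explicit is precisely the obstacle you flag at the end: ``absorbing the exponentially small complement event'' does not work as stated because $\|\hat H^{-1}\|$ is unbounded there, and the paper resolves this via a dedicated tails-to-expectation lemma that integrates the small-ball bound $\mathbb{P}(\hat H \succeq t H) \geq 1-(\tilde C t)^{\nu n/6}$ against $\beta^k\,d\beta$ to obtain finite moments $\mathbb{E}\|\hat H^{-1}-H^{-1}\|^k$ and $\mathbb{E}\|\hat H^{-1}\|^k$---this is the concrete mechanism you should supply in place of the hand-wave.
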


\begin{proof}
We can rewrite the score-matching population risk in terms of a joint distribution $(X,Y) \in \mathbb{R}^{m \times d} \times \mathbb{R}^m$:
$$\min_{\theta} \ell(\theta)=\mathbb{E}_{(X,Y)} \left[\frac12 \theta^\top X X^\top \theta - \theta^\top Y\right] = \frac12 \theta^\top H \theta - \theta^\top g~,$$
where $H = \mathbb{E}[XX^\top]$ and $g = \mathbb{E}[Y]$. The empirical objective is the quadratic form 
\begin{equation}
\min_{\theta} \frac12 \theta^\top \hat{H} \theta - \theta^\top \hat{g}~,   
\end{equation}
with $\hat{H}=\frac1n \sum_{i=1}^n X_i X_i^\top$ and $\hat{g} = \frac1n \sum_i Y_i$. 

We want to control the expected excess risk $\mathbb{E} \ell( \hat{\theta}) - \ell( \theta^*)$ and the norm $\| \hat{\theta} - \theta^*\|$
, where 
$$\hat{\theta} = \hat{H}^{-1} \hat{g}~,~\theta^* = H^{-1} g~.$$
Since $\ell(\theta)$ is quadratic and $\theta^*$ is its global minimum, 
observe that 
\begin{align}
\label{eq:gruy}
\ell(\theta) - \ell(\theta^*) &= \nabla_\theta \ell(\theta^*)^\top( \theta - \theta^*) + \frac12 (\theta - \theta^*)^\top \nabla_\theta^2 \ell(\theta^*) (\theta - \theta^*) \nonumber \\
&= \frac12  (\theta - \theta^*)^\top H (\theta - \theta^*) ~,
\end{align}
which shows that the excess risk can be bounded from the mean-squared error $\mathbb{E}\| \hat{\theta} - \theta^*\|^2$ with 
\begin{equation}
\label{eq:basic}
\mathbb{E} \ell(\hat{\theta}) - \ell(\theta^*) \leq \frac{\|H\|}{2}\mathbb{E} \|\hat{\theta} - \theta^*\|^2~.    
\end{equation}
Let $\upsilon := \hat{g} - g$ and $\Upsilon = \hat{H}^{-1} - {H}^{-1}$. 
By definition, we have
\begin{equation}
\label{eq:deftheta}
\hat{\theta} - \theta^* = \hat{H}^{-1} (g + \upsilon) - H^{-1} g = \Upsilon g + \hat{H}^{-1} \upsilon~,
\end{equation}
so 
\begin{equation}
\label{eq:basic2}
   \mathbb{E} \| \hat{\theta} - \theta^*  \|^2 \leq 2(\mathbb{E}\| \Upsilon \|^2) \|g\|^2 + 2\mathbb{E} \| \hat{H}^{-1} \upsilon \|^2~.
\end{equation}


Let us begin with the first term in the RHS of (\ref{eq:basic2}), involving $\Upsilon$. We claim that there exists $C_0$, only depending on the assumption parameters in \ref{ass:normbound_our} and \ref{ass:smallball}, such that
\begin{equation}
\label{eq:ups2}
    \mathbb{E} \| \Upsilon \|^2 \leq C_0 \frac{\| H^{-2}\|}{n} + O\left( \frac{m^3}{n^2}\right)~.
\end{equation}
The main technical ingredient is to exploit upper and lower tail bounds of $\hat{H}=\hat{H}_n$ to establish 
a control on expectation, via the following Lemma. 
\begin{lemma}[From tail bounds to Expectation]
\label{lem:tails_to_exp}
Suppose the empirical covariance $\hat{\Sigma}_n$ satisfies the following lower and upper tail bounds:
\begin{align}
\hat{\Sigma}_n \preceq (1+s) \Sigma & ~\text{ with probability greater than }~1-\eta_n(s)~,s\geq 0,\nonumber \\
\hat{\Sigma}_n \succeq (1-t) \Sigma & ~\text{ with probability greater than }~1-\delta_n(t)~, \, t\in (0,1)~.
\end{align}
Then 
\begin{eqnarray}
    \mathbb{E} \| \hat{\Sigma}_n^{-1} - \Sigma^{-1} \| &\leq& \|\Sigma^{-1} \| \left(\int_0^\infty \delta_n\left(\frac{\beta}{1+\beta}\right) d\beta + \int_0^1 \eta_n\left(\frac{\beta}{1-\beta}\right) d\beta \right)~, \label{eq:firstmom} \\
        \mathbb{E} \| \hat{\Sigma}_n^{-1} - \Sigma^{-1} \|^2 &\leq& \|\Sigma^{-1} \|^2 \left(\int_0^\infty \beta \delta_n\left(\frac{\beta}{1+\beta}\right) d\beta + \int_0^1 \beta \eta_n\left(\frac{\beta}{1-\beta}\right) d\beta \right)~. \label{eq:secmom}
\end{eqnarray}
\end{lemma}
Thanks to assumptions \ref{ass:smallball} and \ref{ass:normbound}, the tail bounds of Theorems \ref{thm:covconc} and \ref{thm:invertible_cov} apply, yielding 
\begin{equation}
\delta_n(t) = \min((\tilde{C}(1-t))^{\nu n/6}, 2m \exp( -n^2 t^2 / C m))~,~\eta_n(s) = 2m \exp( -n^2 s^2 / C m)~.
\end{equation}
We now apply Lemma \ref{lem:tails_to_exp} with these values. 
Let us first address the term $\eta_n$. We have 
$$\eta_n( \beta/(1-\beta)) = 2m \exp( -n^2 \beta^2 (1-\beta)^{-2} /(Cm))~,$$
and hence
\begin{align}
\label{eq:d11}
\int_0^1 \beta \eta_n( \beta/(1-\beta)) d\beta &= 2m \int_0^1 \beta \exp( -n^2 \beta^2 (1-\beta)^{-2} /(Cm)) d\beta \nonumber \\
&\leq 2m \int_0^1 \beta \exp( -n^2 \beta^2 /(Cm)) d\beta \nonumber \\
& \leq 2m \sqrt{\pi}\frac{\sqrt{C m/2}}{n} \mathbb{E}_{Z \sim \mathcal{N}(0, Cm/(2n^2))}[|Z|] \\
&\leq  \widetilde{C} \frac{m^3}{n^2} ~.
\end{align}
Let us now study the term in $\delta_n$. For any $\beta^*$ we have
\begin{align*}
    \int_0^\infty \beta \delta_n(\beta (1+\beta)^{-1} ) d\beta &\leq 2m \int_0^{\beta^*} \beta \exp( -n^2 \beta^2 (1+\beta)^{-2} /(Cm)) d\beta + \int_{\beta^*}^\infty \beta (\tilde{C}(1+\beta)^{-1})^{\nu n / 6} d\beta~, \\
    &\leq 2m \int_0^{\beta^*} \beta \exp( -n^2 \beta^2 (1+\beta^*)^{-2} /(Cm)) d\beta + \frac{\tilde{C}^{\nu n / 6}}{\nu n / 6 -2 } (1+\beta^*)^{-\nu n / 6 + 2} \\
    &\leq 2 \sqrt{\pi} C (1+\beta^*)^2 \frac{m^{3}}{n^2} + \frac{\tilde{C}^{\nu n / 6}}{\nu n / 6 -2 } (1+\beta^*)^{-\nu n / 6 + 2}~. 
\end{align*}
Picking $\beta^* = \tilde{C}$ above gives 
\begin{equation}
\label{eq:d12}
    \int_0^\infty \beta \delta_n(\beta (1+\beta)^{-1} ) d\beta \leq \frac{\bar{C}}{n} + O\left( \frac{m^3}{n^2}\right)~,
\end{equation}
where $\bar{C}$ only depends on $\nu, C, \tilde{C}$. 
From (\ref{eq:d11}) and (\ref{eq:d12}) we conclude that
\begin{equation}
\label{eq:d13}
\mathbb{E} \| \hat{H}_n^{-1} - H^{-1} \|^2 = O\left( \frac{\| H^{-2} \|}{n}\right)~,
\end{equation}
proving (\ref{eq:ups2}).

Let us now bound the second term in the RHS of (\ref{eq:basic2}). 
We have 
$$\| \hat{H}^{-1} \upsilon\|^2 \leq \| \hat{H}^{-2} \| \|\upsilon\|^2~,$$
so by Cauchy-Schwartz we obtain
\begin{equation}
\label{eq:blop0}
\mathbb{E}[\| \hat{H}^{-1} \upsilon\|^2] \leq \left(\mathbb{E}[\| \hat{H}^{-4} \|] \right)^{1/2} \left(\mathbb{E}[\|\upsilon\|^4] \right)^{1/2}~.
\end{equation}
  By assumption, we have  
\begin{equation}
\label{eq:blop1}
\left(\mathbb{E}[\|\upsilon\|^4]\right)^{1/2} \leq K_Y \mathbb{E} [\| \upsilon\|^2] = \frac{K_Y}{n} \mathbb{E}[\|Y\|^2]~.
\end{equation}
Finally, we use the following lemma, showing that $\mathbb{E}[\| \hat{H}^{-4} \|]$ is bounded.
\begin{lemma}[Finite Second and Fourth Moments of $\hat{H}^{-1}$]
\label{lem:Hinv_bounded}
Assume $n > 24 / \nu$. Then 
\begin{equation}
\label{eq:hinvb}
\mathbb{E} [\| \hat{H}^{-2} \|] \leq \tilde{C}_2 \|H^{-1}\|^2~ \text{and }~\mathbb{E} [\| \hat{H}^{-4} \|] \leq \tilde{C}_4 \|H^{-1}\|^4~.
\end{equation}
\end{lemma}
From (\ref{eq:blop0}), (\ref{eq:blop1}) and (\ref{eq:hinvb}) we obtain 
\begin{equation}
    \mathbb{E}[\| \hat{H}^{-1} \upsilon\|^2] \leq \frac{ \xi \sqrt{\tilde{C}_4} \|H^{-1}\|^2 \mathbb{E}[\|Y\|^2]}{n}~,
\end{equation}
which, together with (\ref{eq:ups2}) yields 
\begin{equation}
    \label{eq:bigblu}
    \mathbb{E} \| \hat{\theta} - \theta^* \|^2 \leq O\left( \frac{\| H^{-1}\|^2 (\|\mathbb{E}[Y]\|^2 + K_Y \mathbb{E}[\|Y\|^2])}{n}\right)~,
\end{equation}
and therefore 
\begin{equation}
\label{eq:mainres}
    \mathbb{E}\ell(\hat{\theta}) - \ell(\theta^*) \leq O\left( \frac{\kappa \| H^{-1} \| (\|\mathbb{E}[Y]\|^2 + K_Y \mathbb{E}[\|Y\|^2]) }{n}\right)~,
\end{equation}
proving (\ref{eq:FIgen}) as claimed.


Let us now control $\hat{{\bar\alpha}}$ such that $\hat{\theta} \in \Theta_{\hat{{\bar\alpha}}}$. 
From $\log p_{\theta}(\bar{x}|x) = \theta^\top \bar{\Phi}(x,\bar{x})$ we directly obtain 
$$\nabla^2 \log p_{\hat{\theta}}(\bar{x}|x) = \nabla^2 \log p_{{\theta^*}}(\bar{x}|x) + \sum_{k=1}^m (\hat{\theta}_k - \theta^*_k) \nabla^2 \bar{\Phi}_k(\bar{x}|x)~,$$
and thus, for any $(\bar{x}, x)$, 
\begin{align}
\label{eq:mim0}
   \| \nabla^2 \log p_{\hat{\theta}}(\bar{x}|x) - \nabla^2 \log p_{{\theta^*}}(\bar{x}|x) \| &\leq \sum_k |\hat{\theta}_k - \theta^*_k| \| \nabla^2 \bar{\Phi}_k(\bar{x}|x) \| \nonumber \\
    &\leq  \| \hat{\theta} - \theta^* \| \| \nabla^2 \bar{\Phi}(\bar{x}|x) \|~, \nonumber \\
    &\leq  \| \hat{\theta} - \theta^* \| \sqrt{m} M_{\bar{\Phi}} ~,
\end{align}
where $\| \nabla^2 \bar{\Phi}(\bar{x}|x) \|^2:= \sum_{k=1}^m \| \nabla^2 \bar{\Phi}_k(\bar{x}|x) \|^2$, and 
$M_{\bar{\Phi}} = \max_k \sup_{x,\bar{x}}  \| \nabla^2 \bar{\Phi}_k(\bar{x}|x) \| <\infty$ by assumption (ii).  
It follows from (\ref{eq:mim0}) that 
\begin{equation}
\label{eq:mim00}
    \inf_{(\bar{x}, x)} \lambda_{\text{min}}( \nabla^2 \log p_{\hat{\theta}}(\bar{x}, x)) \geq {\bar\alpha} - \| \hat{\theta} - \theta^* \| \sqrt{m} M_{\bar{\Phi}}~.
\end{equation}

We will now use tail probability bounds for the norm $\| \hat{\theta} - \theta^*\|$, captured in the following lemma:
\begin{lemma}[Tail bounds for $\|\hat{\theta} - \theta^*\|$]
\label{lem:tailbounds_dif}
We have 
\begin{equation}
\label{eq:brit}
\mathbb{P}( \| \hat{\theta} - \theta^* \| > t) \leq f_n(t/ \|H^{-1} \|), 
\end{equation}
with 
\begin{align}
f_n(s) & \leq \min\left[2m \exp\left( -n^2 \frac{\left( s / (2  \|g\|)\right)^2 }{ (1+ \left( s / (2  \|g\|)\right))^2 Cm  } \right), (\tilde{C} (2 C_Y)s^{-1} )^{\nu n / 6} \right]+ \\ 
& + 2m \exp( -n^2 \left( s / (2  \|g\|)\right)^2 / Cm) + \left( \frac{C_0}{s\sqrt{n}} \right)^{\nu n / 6} ~,\\
\end{align}
where $\tilde{C}, C, C_Y, \|g\|, \nu$ are constants from Assumptions \ref{ass:smallball}, \ref{ass:normbound_our}.   
Moreover, for $s \ll 1$, we have 
\begin{equation}
\label{eq:criq}
f_n(s) = \exp( -O(n (\log n + \log s)))
\end{equation}
\end{lemma}

From (\ref{eq:gruy}) and (\ref{eq:mim00}) we obtain
\begin{align*}
    \mathbb{E}_x \mathrm{KL}( p \,\|\, p_{\hat{\theta}}) &\leq \frac{1}{2\hat{{\bar\alpha}}}\left(\ell(\theta^*) + \| \hat{\theta} - \theta^*\|^2 \|H\| \right) \\
    &\leq \frac{\ell(\theta^*) + \| \hat{\theta} - \theta^*\|^2 \|H\|}{{\bar\alpha} - \| \hat{\theta} - \theta^* \| \sqrt{m} M_{\bar{\Phi}}}~,
\end{align*}
and therefore 
\begin{align*}
    \mathbb{P} \left[ \bar{\epsilon}^L \leq \frac{\ell(\theta^*)}{{\bar\alpha}}\left(1+\frac{bt + \ell(\theta^*)^{-1} \|H\| t^2}{{\bar\alpha} - bt}\right) \right] & \geq \mathbb{P}[ \| \hat{\theta} - \theta^*\| \leq t] \\
    &\geq 1-f_n(t/\|H^{-1}\|) ~,
\end{align*}
where $b = \sqrt{m} M_{\bar{\Phi}}$. 
As a result, for $t \ll \sqrt{m} M_\Phi \ell^\star \|H\|^{-1}$ we have 
\begin{align}
\mathbb{P} \left[ \bar{\epsilon}^L \leq \frac{\ell(\theta^*)}{{\bar\alpha}}\left(1+t\frac{b}{{\bar\alpha}}\right) \right] &\geq 1 - 4m \exp\left( -\frac{C n^2 t^2}{m \|H^{-1} \|^2}\right) + \left( \frac{C_0 \|H^{-1}\| }{t \sqrt{n}} \right)^{\nu n /6} \\
& = 1 - \exp( - O(n ( \log t + \log n - \log \sqrt{m}))) ~,
\end{align}
proving (\ref{eq:KLgen}).


Finally, let us prove (\ref{eq:scprop}). 
From (\ref{eq:deftheta}) we have 
\begin{equation}
\label{eq:bip}
\| \hat{\theta} - \theta^* \| \leq \| \Upsilon\| \|g\| + \| \hat{H}^{-1} \| \|\upsilon\|~.    
\end{equation}
The same argument leading to (\ref{eq:d13}) can be now applied to the first moment $\mathbb{E} \| \Upsilon \|$, 
yielding
\begin{align}
\label{eq:d11bis}
\int_0^1 \eta_n( \beta/(1-\beta)) d\beta &= 2m \int_0^1 \exp( -n^2 \beta^2 (1-\beta)^{-2} /(Cm)) d\beta \nonumber \\
&\leq 2m \int_0^1 \exp( -n^2 \beta^2 /(Cm)) d\beta \nonumber \\
&\leq  \sqrt{2\pi C} \frac{m^{3/2}}{n} ~, \text{ and }
\end{align}
\begin{align*}
         \int_0^\infty \delta_n(\beta (1+\beta)^{-1} ) d\beta &\leq 2m \int_0^{\beta^*} \exp( -n^2 \beta^2 (1+\beta)^{-2} /(Cm)) d\beta + \int_{\beta^*}^\infty (\tilde{C}(1+\beta)^{-1})^{\nu n / 6} d\beta~, \\
     &\leq 2m \int_0^{\beta^*} \exp( -n^2 \beta^2 (1+\beta^*)^{-2} /(Cm)) d\beta + \frac{\tilde{C}^{\nu n / 6}}{\nu n / 6 -1 } (1+\beta^*)^{-\nu n / 6 + 1} \\
     &\leq 2 \sqrt{\pi} \sqrt{C} (1+\beta^*) \frac{m^{3/2}}{n} + \frac{\tilde{C}^{\nu n / 6}}{\nu n / 6 -1 } (1+\beta^*)^{-\nu n / 6 + 1}~. 
\end{align*}
Picking again $\beta^* = \tilde{C}$ above gives 
\begin{equation}
\label{eq:d12bis}
    \int_0^\infty \delta_n(\beta (1+\beta)^{-1} ) d\beta \leq \frac{\bar{C} m^{3/2} }{n} ~,
\end{equation}
and therefore 
\begin{equation}
    \label{eq:d14}
\mathbb{E} \| \Upsilon \| = O\left( \frac{\| H^{-1} m^{3/2}\|}{n}\right)~.    
\end{equation}
From (\ref{eq:bip}), using (\ref{eq:d14}) and again Cauchy-Schwartz, we obtain
\begin{align}
\label{eq:d15}
    \mathbb{E} \| \hat{\theta} - \theta^* \| &\leq \mathbb{E}[\|\Upsilon\|] \|g\| + \frac{\sqrt{\mathbb{E}[\| \hat{H}^{-2} \|] \mathbb{E}[\|Y\|^2]}}{\sqrt{n}} \nonumber \\
    &= O\left(\| H^{-1} \| \sqrt{\frac{\mathbb{E}[\|Y\|^2]}{n}} \right)~,
\end{align}
proving (\ref{eq:scprop}).



\end{proof}

\begin{proof}[Proof of Lemma \ref{lem:tails_to_exp}] 
Using a crude union bound, we have 
\begin{equation}
\label{eq:d1}
(1-t) \Sigma \preceq \hat{\Sigma}_n \preceq (1+s) \Sigma  
\end{equation}
with probability greater than $1 - \delta_n(t) - \eta_n(s)$. 
Under the event (\ref{eq:d1}), we equivalently have 
$$(1+s)^{-1} \Sigma^{-1} \preceq \hat{\Sigma}_n^{-1} \preceq (1-t)^{-1} \Sigma^{-1}~,$$
and hence 
$$\| \hat{\Sigma}_n^{-1} - \Sigma^{-1} \| \leq \| \Sigma^{-1}\| \max( |1-(1+s)^{-1}|, |1-(1-t)^{-1} | )~.$$
Denoting $Z = \| \hat{\Sigma}_n^{-1} - \Sigma^{-1}\|$, we thus have
\begin{align}
\label{eq:tailinvdif}
\mathbb{P}( Z \leq \| \Sigma^{-1}\| \beta)  & \geq \mathbb{P}\left((1-t_\beta) \Sigma \preceq \hat{\Sigma}_n \preceq (1+s_\beta) \Sigma  \right)\\
&\geq 1 - \delta_n(t_\beta) - \eta_n(s_\beta)~,
\end{align}
where $s_\beta, t_\beta$ are defined such that 
$$ |1-(1+s_\beta)^{-1}| = \beta~,~ |1-(1-t_\beta)^{-1}| = \beta~.$$
We thus obtain $s_\beta = \frac{\beta}{1-\beta}$ for $\beta\in (0,1)$, and $t_\beta = \frac{\beta}{1+\beta}$ for $\beta\in (0,\infty)$. 
For a non-negative random variable $Z$ with c.d.f. $F(\beta) = \mathbb{P}( Z \leq \beta)$ we have
$$\mathbb{E}Z^2 = \int_0^{\infty} \beta^2 F'(\beta) d\beta = \int_0^\infty \beta (1-F(\beta)) d\beta~,$$
and therefore  
\begin{align*}
\mathbb{E} Z^2 & = \int_0^\infty \beta (1- F(\beta)) d\beta \\
&= \| \Sigma^{-2} \| \int_0^\infty \beta (1 - F(\|\Sigma\|^{-1} \beta) ) d\beta \\
&\leq \|\Sigma^{-2}\| \left(\int_0^\infty \beta \delta_n( \beta/(1+\beta)) d\beta + \int_0^1 \beta \eta_n( \beta/(1-\beta)) d\beta \right)~.
\end{align*}
\end{proof}

\begin{proof}[Proof of Lemma \ref{lem:Hinv_bounded}]
By directly applying Theorem \ref{thm:invertible_cov}, we have 
\begin{equation}
\label{eq:vuk}
\mathbb{P}( \| \hat{H}_n^{-1} \| \leq t^{-1} \| H^{-1}\|) \geq 1 - (\tilde{C} t)^{\nu n / 6}~.
\end{equation}
If $F(\beta) = \mathbb{P}( \| \hat{H}_n^{-1} \| \leq \beta)$, 
it follows that 
\begin{align*}
\mathbb{E}[ \| \hat{H}^{-4} \| ] &= \int_0^\infty \beta^4 F'(\beta) d\beta = 4 \int_0^\infty \beta^3 (1- F(\beta)) d\beta \\
&\leq 4 \int_0^\infty \beta^3 \min(1,(\tilde{C} \| H^{-1}\| \beta^{-1})^{\nu n /6}) d\beta \\
&= 4 \| H^{-1}\|^4 \tilde{C}^4 \int_0^\infty \min(1,\beta^{3-\nu n /6}) d\beta \\
& = \tilde{C}_4 \| H^{-1}\|^4~,
\end{align*}
where we used $\nu n /6   > 4$ in the last step. 
The second moment is treated analogously. 
\end{proof}

\begin{proof}[Proof of Lemma \ref{lem:tailbounds_dif}]
As we argued previously, from (\ref{eq:deftheta}) we have that 
$$\| \hat{\theta} - \theta^* \| \leq \| \Upsilon \| \|g\| + \| \hat{H}^{-1} \| \|\upsilon\| ~.$$
We will use tail bounds for $\| \Upsilon\|$, $\|\hat{H}^{-1} \|$ and $\| \upsilon\|$ and combine them with a crude union bound to yield the desired tail control. 
Recall from eq (\ref{eq:tailinvdif}) that  
\begin{equation}
\label{eq:tailinvdif2}
\mathbb{P}( \| \Upsilon\| \leq \| H^{-1}\| t)  \geq 1 - \gamma_n(t)~,
\end{equation}
where
\begin{equation}
\gamma_n(t) = 
\begin{cases}
\delta_n\left(\frac{t}{1+t}\right) + \eta_n\left(\frac{t}{1-t}\right)~, & \text{ if } t \leq 1\,,\\
 \delta_n\left(\frac{t}{1+t}\right) & \text{ otherwise,}
\end{cases}
\end{equation}
with
\begin{equation}
\delta_n(s) = \min((\tilde{C}(1-s))^{\nu n/6}, 2m \exp( -n^2 s^2 / C m))~,~\eta_n(s) = 2m \exp( -n^2 s^2 / C m)~.
\end{equation}
We also obtained in (\ref{eq:vuk}) 
\begin{equation}
\label{eq:vuk2}
\mathbb{P}( \| \hat{H}_n^{-1} \| \leq t \|H^{-1}\|) \geq 1 - \tilde{\gamma}_n(t)~,
\end{equation}
with 
\begin{equation}
\tilde{\gamma}_n(t) = \min(1,(\tilde{C}  t^{-1})^{\nu n / 6})~,
\end{equation}
and by Assumption \ref{ass:normbound_our} we know that $\|\upsilon\| \leq \frac{K_Y \sqrt{\mathbb{E}[\|Y\|^2]}}{\sqrt{n}}$ almost surely. 
Therefore, via a union bound we obtain 
\begin{align}
\mathbb{P}( \| \hat{\theta} - \theta^* \| \leq \|H^{-1} \| t) &\geq \mathbb{P}\left[ \max\left( \| \Upsilon\| \|g\| , \|\hat{H}^{-1 }\| K_Y \sqrt{\mathbb{E}[\|Y\|^2]/n} \right) \leq \|H^{-1}\| t/2\right] \\
&\geq 1 - \gamma_n( t / (2 \|g\|)) - \tilde{\gamma}_n( \sqrt{n} t / (2 C_Y) )~,
\end{align}
and hence $\mathbb{P}( \| \hat{\theta} - \theta^* \| > s) \leq f_n\left(\frac{s}{ \|H^{-1}\|}\right)$ with 
\begin{align*}
f_n(s) &= \gamma_n\left( s / (2 \| \|g\|)\right) + \tilde{\gamma}_n\left( \sqrt{n} s / (2 C_Y) \right)~.
\end{align*}
Finally, we verify that
\begin{align*}
f_n(s)&= \gamma_n\left( s / (2  \|g\|)\right) + \min(1, (\tilde{C} (2 C_Y)s^{-1} n^{-1/2} )^{\nu n / 6}) \\
&= \gamma_n\left( s / (2  \|g\|)\right) + \left( \frac{C_0}{s\sqrt{n}} \right)^{\nu n / 6} \\
&\leq \min\left[2m \exp\left( -n^2 \frac{\left( s / (2  \|g\|)\right)^2 }{ (1+ \left( s / (2  \|g\|)\right))^2 Cm  } \right), (\tilde{C} (2 C_Y)s^{-1} )^{\nu n / 6} \right]+ \\ 
& + 2m \exp( -n^2 \left( s / (2  \|g\|)\right)^2 / Cm) + \left( \frac{C_0}{s\sqrt{n}} \right)^{\nu n / 6} \\
&= \min\left[\exp\left( - \frac{ n^2 C_1^2 s^2  }{ (1+ C_1 s )^2 Cm  } + \log(2m)\right) , (C_0 s^{-1})^{\nu n /6}\right]
+\exp\left( - \frac{ n^2 C_1^2 s^2  }{ Cm  } + \log(2m)\right)+
 \left( \frac{C_0}{s\sqrt{n}} \right)^{\nu n / 6} ~.
\end{align*}
Finally, we verify that if $\log s \ll 1$, the last term dominates as $n$ increases, showing (\ref{eq:criq}). 


\end{proof}

\section{Proof of \Cref{th:conditionally-log-concave-scalar-potential}}
\label{sec:proof_th_cslc}

We directly compute the Hessian
\begin{align*}
    -\nabla^2_{\bar x_1} \log p(\bar x_1 | x_1)
    &= -\bar G_1 \nabla^2_{x} \log p(x) \bar G_1\trans \\
    &= \bar G_1 \paren{K - \mathrm{diag}\paren{(v''(x[i]))}_i} \bar G_1\trans,
\end{align*}
where we have used
\begin{align*}
    p(\bar x_1 | x_1) &= \frac{p(x)}{p(x_1)}.
\end{align*}

Both terms in the Hessian can now be bounded from below. The assumption on the range of $\bar G_1$ implies that
\begin{align*}
    \bar G_1 K \bar G_1\trans \succeq \lambda |\omega_0|^\eta \Id,
\end{align*}
and the assumption on $v''$ implies that
\begin{align*}
    \bar G_1 \mathrm{diag}\paren{(v''(x[i]))}_i \bar G_1\trans \succeq -\gamma \bar G_1 \bar G_1\trans = -\gamma \Id,
\end{align*}
where we have used the fact that $\bar G_1$ is an orthogonal projector.

Combining the two then gives
\begin{align*}
    -\nabla^2_{\bar x_1} \log p(\bar x_1 | x_1) &\succeq \paren{\lambda |\omega_0|^\eta - \gamma} \Id,
\end{align*}
and the assumption on $|\omega_0|$ guarantees that $\lambda |\omega_0|^\eta - \gamma > 0$. Similarly, the assumption $v'' \leq \delta$ implies that
\begin{align*}
    -\nabla^2_{\bar x_1} \log p(\bar x_1 | x_1) &\preceq \paren{\lambda \Omega^\eta + \delta} \Id,
\end{align*}
where $\Omega = \sup |\omega|$ is the maximum frequency, which concludes the proof.


\end{document}